\newcommand{\SE}{\text{SE}(3)}
\newcommand{\SO}{\text{SO}(3)}
\newcommand{\R}{\mathbb{R}}
\newcommand{\se}{\mathfrak{se}(3)}
\newcommand{\cI}{\mathbb{I}}
\newcommand{\Ad}{\mathbf{Ad}}
\newcommand{\ad}{\mathbf{ad}}
\newcommand{\0}{\mathbf{0}}
\newcommand{\1}{\mathbf{1}}
\newcommand{\Add}{\mathfrak{Ad}}
\newcommand{\EX}{\mathfrak{X}}
\newcommand{\ex}{\mathfrak{x}}
\newcommand{\F}{\mathfrak{F}}
\newcommand{\N}{\mathcal{N}}
\newcommand{\M}{\mathcal{M}}
\newcommand{\diag}{\mathbf{diag}}
\newtheorem{theorem}{Theorem}
\newtheorem{remark}{Remark}
\newtheorem{lemma}{Lemma}
\newtheorem{problem}{Problem}
\newtheorem{proposition}{Proposition}
\newtheorem{corollary}{Corollary}
\newtheorem{definition}{Definition}
\begin{document}

\title{Fast and Modular Whole-Body Lagrangian Dynamics of Legged Robots with Changing Morphology}
%

\author{Sahand~Farghdani,~Omar Abdelrahman,~and~Robin~Chhabra~\IEEEmembership{Senior Member,~IEEE}
\thanks{S. Farghdani and O. Abdelrahman are with the Autonomous Space Robotics and Mechatronics Laboratory, Carleton University, Ottawa, ON, Canada\\
\indent R. Chhabra is with the Mechanical, Industrial, and Mechatronics Engineering Department, Toronto Metropolitan University, Toronto, ON, Canada}
}


\maketitle

\begin{abstract}

Fast and modular modeling of multi-legged robots (MLRs) is essential for resilient control, particularly under significant morphological changes caused by mechanical damage. Conventional fixed-structure models, often developed with simplifying assumptions for nominal gaits, lack the flexibility to adapt to such scenarios. To address this, we propose a fast modular whole-body modeling framework using Boltzmann-Hamel equations and screw theory, in which each leg’s dynamics is modeled independently and assembled based on the current robot morphology.
This singularity-free, closed-form formulation enables efficient design of model-based controllers and damage identification algorithms. Its modularity allows autonomous adaptation to various damage configurations without manual re-derivation or retraining of neural networks. We validate the proposed framework using a custom simulation engine that integrates contact dynamics, a gait generator, and local leg control. Comparative simulations against hardware tests on a hexapod robot with multiple leg damage confirm the model's accuracy and adaptability.
Additionally, runtime analyses reveal that the proposed model is approximately three times faster than real-time, making it suitable for real-time applications in damage identification and recovery.



\end{abstract}

\begin{IEEEkeywords}
Multi-legged Robot, Boltzmann-Hamel equations, Modular modeling, Singularity-free dynamics, Lie groups.
\end{IEEEkeywords}

\IEEEpeerreviewmaketitle

\section{Introduction}

\IEEEPARstart{T}{he} development of intelligent and adaptable walking robots has garnered increasing attention due to their potential across a wide range of applications, including anti-terrorism and military operations \cite{raibert2008bigdog}, space exploration, search and rescue missions \cite{ref44, arm2023scientific}, inspection tasks \cite{Spot, Anymal}, and delivery services \cite{gong2023legged}. Advances in hardware design have greatly enhanced the mobility of Multi-Legged Robots (MLRs), enabling their deployment in remote and hazardous environments. However, despite these improvements, most MLRs continue to demonstrate conservative autonomous behaviors due to several key challenges.


First, the complex dynamics of MLRs make it difficult to create fast, real-time models for motion prediction and control. Second, in cluttered or hostile environments, MLRs must plan routes and gaits efficiently while managing a high risk of collisions, which can lead to performance deterioration or unexpected physical damage. The simplified models traditionally used for guidance, navigation, and control may no longer apply in these cases. To overcome these challenges, there is a clear need for a whole-body dynamic model that is both fast and modular. Such a model should avoid simplifications while adapting easily to various damage scenarios.

Modularity, as defined in this paper, refers to the ability to add or remove identical legs at different locations on the main body and automatically derive the symbolic model without differentiation of expressions. Further, our definition of ``damage" is specifically limited to scenarios such as locked joints, partially broken components, or the complete loss of robot’s limbs/legs. Here, we do not address sensory damage, joint looseness, or computer hardware problems.

This paper presents a fast and modular approach to modeling MLRs that is capable of adapting to damage scenarios without the need for retraining neural networks or extensive manual re-derivation. Such an approach is essential for real-time planning, control, and state estimation of MLRs.



\subsection{Contributions}


When damage occurs, whole-body models relying on data-driven approaches, such as neural networks, typically require retraining with new data. This retraining process is time-consuming and can jeopardize mission success, as real-time motion prediction is only possible once the new training is complete. Additionally, the model's structural parameters may need to be reconfigured after damage. 
On the other hand, analytical modeling methods rely on fixed physical parameters and a predetermined structure, making them rigid to adjust in the event of unexpected damage. The challenges of complex manual calculations and reliance on physical measurements complicate the accurate modeling of damaged robots.

This paper addresses these issues by introducing several key contributions to enhance the flexibility and adaptability of current analytical methods for MLRs. Specifically, we propose an approach that overcomes the limitations of analytical modeling by enabling the model to adapt autonomously to morphological changes and damage scenarios without the need for manual recalculations. Our method maintains the precision of analytical models while incorporating modularity, allowing the system to handle a range of damage conditions more effectively than existing methods.

First, we present a model formulation that supports a wide range of MLR morphologies. This model leverages the Boltzmann-Hamel formulation of dynamics, offering a symbolic and singularity-free representation of the MLR dynamics that effectively decouples the main body and leg dynamics. Unlike previous works \cite{katayama2023model, liu2022design, carpentier2018multicontact, 9811926, batke2022optimizing, kontolatis2018investigation, lu2019dynamic, askari2019dynamic, mahkam2021framework}, we account for all six degrees of freedom (DoF) of the main body and their influence on leg motions. Our modeling method, based on symbolic equations of motion, achieves speeds that surpass traditional recursive models in efficiency (see Section III).

Second, we introduce a modular version of this model that can adapt to morphology changes. In related works  \cite{mahapatra_roy_pratihar_2020, 8743046, adak2022modeling, wensing2023optimization, biswal2021modeling}, MLR dynamics are typically modeled using Newton-Euler or Lagrangian methods, which require re-derivation of the dynamical equations to address significant, unforeseen morphology changes such as damage. In contrast, we derive a modular, symbolic, and singularity-free formulation that allows the MLR model to adjust dynamically to capture damage with minimal manipulation. Our method enables the modification of the model structure even in the middle of the simulation loop to account for unexpected physical damage to the legs (see Section \ref{chap:Modular}).

Third, we develop a formulation that not only reduces the computational load during runtime but also simplifies the generation of model equations without compromising generality. In this new formulation, the dynamical equations are derived only for the main body and a single leg. The model for the entire system, regardless of the number, geometry, and inertia of legs, can then be generated by replicating the leg equations. 
This leg reproduction capability, combined with our dynamic engine for simulating MLR motion in real-time, results in a model that is approximately three times faster than real-time (see Section \ref{chap:fast}).

Fourth, we provide extensive validation of our methods through physical experiments on both healthy and damaged robots under various damage scenarios. These validations incorporate both internal and external sensory information (see Section \ref{chap4:results}).

\subsection{Overview of Contents}

This paper presents a novel dynamic model formulation for MLR that meets all the above requirements. Section II reviews related work on MLR modeling and highlights the contributions of this study. Section III explains mathematical symbols and our previous theory on the singularity-free dynamical equations for MLR. The construction of our modeling method is divided into modular equations (Section IV) and fast modular dynamical equations (Section V). Finally, we validate our methods with the physical experiments on a hexapod robot, presented in Section VI. 

\section{Related work}

Scientists have thoroughly studied the dynamics of MLRs to develop their Guidance, Navigation, and Control (GNC) systems using two primary methods: (i) data-driven and (ii) analytic. Data-driven models, such as artificial neural networks \cite{an2023artificial}, spiking neural networks \cite{jiang2023fully}, reinforcement learning \cite{yang2020data}, and deep reinforcement learning \cite{gan2022energy}, necessitate a dataset for training. The effectiveness of these models is contingent upon the degree of similarity between the operational conditions of the system and the training setting. A robust training procedure for a robot typically lasts multiple days and may even be prolonged if multiple damage models are considered \cite{hwangbo2019learning, thor2020generic}.

For instance, Gan \textit{et al.} established a deep inverse reinforcement learning method for legged robots' terrain traversability modeling, which considers external and internal sensory information \cite{gan2022energy}. Similarly, researchers at Berkeley proposed a practical reinforcement learning method for precisely adjusting locomotion strategies in real-life scenarios. They showed that a small amount of practical training could significantly enhance the performance of a real A1 quadrupedal robot during its operation. This training allowed the robot to independently refine multiple locomotion skills in different environments, such as an outdoor lawn and various indoor terrains \cite{smith2022legged}. In addition, Yang \textit{et al.} created a hybrid model that combines a physics-based rolling spring-loaded inverted pendulum model with a data-driven model that uses Gaussian process regression to account for unmodeled dynamics. \cite{yang2020data, yang2021legged}.

Analytic models, such as the Newton-Euler equations \cite{mahapatra_roy_pratihar_2020, lin2001dynamic, bennani1996dynamic} and the Lagrangian/Hamiltonian formulation \cite{8743046, adak2022modeling}, are more complex to derive but do not necessitate a training phase to yield system predictions. These models demonstrate the ability to apply our knowledge of physics to unexpected scenarios, which makes them attractive for deployment in autonomous GNC systems \cite{wensing2023optimization}. Moreover, they are compatible with model-based controllers, providing benefits, e.g., incorporating safety restrictions, which are essential in MLR applications. Lagrangian approaches are particularly well-suited for constructing motion planners and controllers, making them stand out among diverse analytical methods. The authors in \cite{roy2011estimation,roy2013dynamic} developed the Euler-Lagrange equations of motion (EOMs) for a six-legged robot to investigate stability and estimate foot forces. Their approach assumes complete access to the states of the main body, which is moving at a constant height with a specified velocity. The model specifically focuses on the dynamics of the legs. Separate research conducted on the MIT Cheetah involved the implementation of a hierarchical controller that utilized proprioceptive impedance control. This approach simplified the dynamical equations by assuming that the robot's motion was planar \cite{hyun2014high}. Furthermore, the research paper by Biswal \textit{et al.} \cite{biswal2021modeling} employed the Euler-Lagrange model to enhance the distribution of forces on the feet of a quadruped robot.

Since MLRs have many degrees of freedom, their motion occurs in a high-dimensional state space, rendering their kinematic and dynamic models complex. It is, therefore, unrealistic to rely entirely on such complex models for implementing controllers, e.g., real-time model predictive control with a long prediction horizon \cite{katayama2023model}. Considering the computational efficiency of trajectory optimization and control, a variety of methods have also been developed that use experimentally simplified models such as centroidal dynamics \cite{orin2013centroidal, liu2022design, carpentier2018multicontact}, single rigid body \cite{9811926, aceituno2017simultaneous, winkler2018gait, batke2022optimizing}, or the linear inverted pendulum models \cite{kajita1991study, herdt2010online, caron2016zmp, li2021trajectory, iqbal2022drs}. For instance, a spring-loaded inverted pendulum model was used to describe the behavior of an MLR \cite{kontolatis2018investigation, lu2019dynamic}. Another study analyzed a dynamic model for miniature four-legged robots that is generated based on the Newton-Euler equation with massless legs \cite{askari2019dynamic}. Similarly, a framework was designed for the dynamic modeling of legged modular miniature robots by assuming no mass for the legs and using closed-chain leg's kinematic \cite{mahkam2021framework}.

In recent years, a combination of data-driven and analytical modeling methods, known as physics-inspired learning methods, has emerged. 
Such methods, including physics-informed neural networks \cite{willard2020integrating, wang2021physics}, act as universal function approximators that embed knowledge of physical laws governing a given dataset into the learning process. These methods can be described by partial differential equations \cite{TORABIRAD2020109687} and are increasingly employed in robotic dynamic modeling \cite{greydanus2019hamiltonian, raissi2018deep, zhong2019symplectic}. For instance, Saviolo \textit{et al.} developed a novel physics-inspired temporal convolutional network to learn quadrotor system dynamics purely from robot experience, resulting in accurate modeling crucial for ensuring agile, safe, and stable navigation. In another study, an augmented deep Lagrangian network was used to model the dynamics of robotic manipulators. This network effectively learned the inverse dynamics model of two multi-DoF manipulators, including a 6-DoF UR-5 robot and a 7-DoF SARCOS manipulator under uncertainties \cite{wu2024dynamic}. Despite the proficiency of these physics-enforced networks in modeling ideal physical systems, they encounter limitations when applied to systems with uncertain non-conservative dynamics that encounter significant morphology changes. Consequently, these methods are not yet suitable for modeling robotic systems undergoing physical damage.

Although the reviewed models partially address nonlinearity and coupling effects between the swing/support legs and the main body, their simplifying assumptions can restrict their effectiveness in abnormal situations. For example, in a robot with mechanical damage and a nonfunctional leg, the assumption of purely 2D motion for the main body is no longer valid. In these instances, a comprehensive whole-body model is essential for identifying and mitigating damage, as discussed in \cite{adak2022modeling}. Additionally, reduced models are often used to simplify the problem's dimensionality and computational complexity in locomotion control. However, it is difficult to account for whole-body constraints at the reduced level and to define an acceptable trade-off between tracking the reduced solution and searching for a new one \cite{budhiraja2019dynamics}. 

Several methods have attempted to capture the whole-body dynamics of multi-legged robotic systems \cite{mahapatra2020multi} using Newton-Euler formalism \cite{ouezdou1998dynamic}, Euler-Lagrange equations \cite{adak2022modeling, shah2012modular}, principle of virtual work\cite{liu2021dynamic}, or Sequential Linear Quadratic (SLQ) optimization \cite{neunert2017trajectory, neunert2018whole}. The SLQ algorithm initially estimates the dynamics of the system. It then linearizes the non-linear dynamics around the trajectory and solves the linear-quadratic optimum control problem in a backward manner. In \cite{bellicoso2017dynamic}, a whole-body model was used in a zero-moment point-based motion planner and hierarchical whole-body controller, which optimizes the whole-body motion and contact forces by solving a cascade of prioritized tasks. Their model utilized Hamiltonian unit quaternions to parametrize the orientation of the main body. In another work, Shah \textit{et al.} proposed a general framework for dynamic modeling and analysis of modular robots using the concept of kinematic modules, where each module consists of a set of serially connected links \cite{shah2012modular}. Their method results in recursive algorithms for inverse and forward dynamics using Decoupled Natural Orthogonal Complement (DeNOC) matrices \cite{nandihal2022dynamic}. Similarly, Sirois developed a whole-body nonlinear Model Predictive Control (MPC) for a free-flying space manipulator, employing the DeNOC-based methodology in the dynamic equations \cite{sirois2021nonlinear}. Current methods face limitations due to singularities arising from using Euler angles to parameterize the main body's orientation. Although these singularities are manageable under normal conditions, they become problematic in scenarios involving extreme rotations, such as a robot limping from mechanical damage. An alternative approach using quaternions avoids these singularities but introduces issues like the double-covering phenomenon and the need for complex and computationally expensive Jacobian mappings to convert quaternions to rotation matrices \cite{muller2016geometric}.

Symbolic EOMs facilitate numerical simulations and enable deeper mathematical analysis of the system. They allow for parametric studies, which provide symbolic expressions for equilibrium points and their stability conditions, assessing the effects of small changes in parameters (e.g., mass, length, and inertia) on system dynamics, and understanding how different coordinate systems influence problem complexity or introduce configuration singularities \cite{lind2021introduction}. This symbolic approach also enhances compatibility with other software, enabling multi-domain simulations, hardware-in-the-loop testing, and optimization or optimal control applications. However, despite these advantages, most multibody system software relies on numeric methods due to the mathematical complexity of deriving symbolic equations for high-dimensional nonlinear systems \cite{gede2013constrained}.


The use of symbolic equations in system dynamics dates back to the 1980s. Cheng \textit{et al.} derived symbolic dynamic EOMs for robotic manipulators using the Newton-Euler formalism, leveraging the Piogram symbolic method \cite{cheng1988symbolic}. Similarly, Fisette \textit{et al.} generated fully symbolic EOMs for multibody systems with flexible beams, presenting examples that demonstrate the efficiency of this approach \cite{fisette1997fully}. As mentioned earlier, symbolic EOMs can also facilitate simulation software development. For instance, \cite{gede2013constrained} developed a Python-based software package for constrained multibody dynamics, extending the SymPy computer algebra system. Furthermore, symbolic modeling is not limited to analytical methods; recent research has employed symbolic genetic algorithms to derive open-form partial differential equations from data, enabling model discovery without prior knowledge of the equation's structure \cite{chen2022symbolic}.

The Special Euclidean group in three dimensions, SE(3), is a matrix Lie group that globally describes the pose of a rigid body. This concept is founded on screw theory, originally developed by Ball based on Chasles' 19th-century work \cite{ball1998treatise}. Brockett used the exponential map of SE(3) to develop the Product of Exponentials (POE) for the forward kinematics of rigid multi-body systems \cite{brockett1984robotic}. Murray and colleagues expanded this by creating a Lie group framework for the kinematics, dynamics, and control of fixed-base manipulators, incorporating the geometric elements of screw theory \cite{murray1994mathematical}. Park emphasized the computational and analytical advantages of this approach by integrating the POE into recursive Newton-Euler and Euler-Lagrange dynamical equations, showing that it simplifies models without losing computational efficiency \cite{park1994computational}. Screw theory and the POE have been widely used in robotics for modeling, control, and path planning \cite{kim2019robotic, stramigioli2001modeling, moghaddam2023singularity}. By employing Lie groups, issues associated with Euler angles or quaternions, such as singularities, double covering, and stability, are avoided, allowing for a global representation of EOMs without local parameterization \cite{murray1994mathematical}. This approach has led to the development of several singularity-free dynamic modeling techniques, including the Euler-Poincaré equations \cite{marsden1993reduced, 10374259}, reduced Euler-Lagrange equations \cite{scheurle1993reduced, murray1997nonlinear}, and Boltzmann-Hamel equations \cite{duindam2007lagrangian}. Stramigioli and others have formulated Boltzmann-Hamel equations for both holonomic and non-holonomic multi-body systems on the SE(3) group, providing globally valid equations through a local diffeomorphism between SE(3) and its tangent space \cite{stramigioli2000hamiltonian, from2012corrections}. These singularity-free equations have been applied across various fields to describe the dynamics of underwater, space, and vehicle-manipulator systems \cite{from2010singularity, from2010, from2011}. Lie group techniques have been successfully used in MLRs for modeling and control purposes. For instance, Teng \textit{et al.} developed a geometric error-state MPC for tracking control of legged robotic systems evolving on SE(3). By exploiting the existing symmetry of the pose control problem on SE(3), they showed that the linearized tracking error dynamics and EOMs in the Lie algebra are globally valid and evolve independently of the system trajectory \cite{teng2022error}. In another work, a geometric bipedal robot model and controller were designed by extending commonly used non-linear control design techniques to non-Euclidean Lie-group manifolds. Then, they combined a model-based gait library design and deep learning to yield a near constant-time and constant-memory policy for fast, stable, and robust bipedal robot locomotion \cite{siravuru2020geometric}.

Matlab Simscape, Adams, and ANVEL are commercial programs for simulating dynamic mobile robots \cite{rohde2009interactive}. Although these programs can model vehicles such as trucks and cars, they are inappropriate for MLRs with arbitrary leg morphologies and contact dynamics. Furthermore, their computational rates are sufficient for offline assessment but insufficient for real-time control and planning \cite{seegmiller2016high}. Notable open-source dynamics libraries include the Open Dynamics Engine (OpenDE) \cite{smith2005open}, open-source Rigid Body Dynamics Library \cite{rbdl, bellicoso2017dynamic}, and OpenSYMORO (for Python symbolic robot modeling) \cite{khalil2014opensymoro}. Because of its accessible documentation and API, OpenDE is widely used to simulate MLRs dynamics \cite{cully2015robots, bongard2006resilient}. However, its contact point model and inflexibility in changing the morphology are its primary drawbacks when it comes to MLR modeling.

\section{Preliminaries and Problem Statement}
\label{preliminaries}

\subsection{Special Euclidean Lie Group Notation }
In this section, we provide a brief overview of the Special Euclidean Lie group $\SE$ and its associated Lie algebra $\se$, which represent the space of all possible configurations and velocities of a rigid body. Our notation follows \cite{murray1994mathematical} to which we refer readers for further details.

Consider a coordinate frame $\mathcal{X}_b$ attached to a rigid body. The configuration of the body relative to a reference frame $\mathcal{X}_a$ is expressed as an element of $\SE$:

\begin{align}
{g}_{a,b} = \begin{bmatrix}
{R}_{a,b} & {P}_{a,b} \\
\0_{1\times 3} & 1
\end{bmatrix} \in \SE,
\end{align}
where ${P}_{a,b} \in \R^3$ is the position vector, ${R}_{a,b} \in \SO$ is the rotation matrix in the Special Orthogonal grop, and $\0$ is a zero matrix of appropriate dimensions.

Given a rigid motion described by a curve $g_{a,b}(t) \in \SE$, the velocity of the rigid body relative to $\mathcal{X}_a$ can be observed either in the body frame $\mathcal{X}_b$ (referred to as body velocity) or in the reference frame $\mathcal{X}_a$ (referred to as spatial velocity). These velocities are obtained via left or right translation into the Lie algebra, respectively: $\widehat{\xi}_{a,b}^b = g_{a,b}^{-1} \dot{g}_{a,b} \in \se$ or $\widehat{\xi}_{a,b}^a = \dot{g}_{a,b} g_{a,b}^{-1} \in \se$.

The $\wedge$ operator serves as a vector space isomorphism between $\se \subset \R^{4 \times 4}$ and $\R^6$, such that for any $\xi = \left[\begin{smallmatrix} v \\ \omega\end{smallmatrix}\right] \in \R^6$, where $v, \omega \in \R^3$ correspond to the linear and angular velocities, we have:

\begin{align}
\widehat{\xi} = \begin{bmatrix} \widehat{\omega} & v \\ \0_{1\times 3} & 0 \end{bmatrix} \in \se.
\end{align}
Here, $\widehat{\omega} \in \mathbb{R}^{3 \times 3}$ denotes a skew-symmetric matrix such that $\widehat{\omega}x = \omega \times x $ for all $\omega, x \in \mathbb{R}^3$. The $\vee$ operator is defined as the inverse of the $\wedge$ operator.

Elements of $\se$ represent infinitesimal motions in time and can be mapped to transformations in $\SE$ using the exponential map $\exp: \se \to \SE$. This map integrates a constant twist over a unit of time to produce a transformation in $\SE$, and it can be used to globally parameterize one-degree-of-freedom (1-DoF) joints \cite{murray1994mathematical}. The exponential map for $\SE$ can be computed using the Rodriguez formula, as detailed in \cite{ExpMap, murray1994mathematical}.

The Adjoint operator $\Ad_{g_{a,b}}: \R^6 \to \R^6$ is used to transform twists (elements of $\se$) from the frame $\mathcal{X}_b$ to the frame $\mathcal{X}_a$, based on the transformation $g_{a,b}$:

\begin{align}
\Ad_{g_{a,b}} :=
\begin{bmatrix}
R_{a,b} & \widehat{P}_{a,b} R_{a,b} \\
\0_{3 \times 3} & R_{a,b}
\end{bmatrix}.
\end{align}
Finally, the Lie bracket operator $[\cdot, \cdot]$ defines an adjoint operator, denoted by $\ad_{\xi}: \R^6 \to \R^6$, such that for any $\xi, \eta \in \R^6$, we have $\ad_{\xi}(\eta) = [\xi,\eta] = (\widehat{\xi} \widehat{\eta} - \widehat{\eta} \widehat{\xi})^\vee$. For $\xi = \left[\begin{smallmatrix} v \\ \omega \end{smallmatrix}\right]$, this operator takes the form:

\begin{align}
\ad_\xi :=
\begin{bmatrix}
\widehat{\omega} & \widehat{v} \\
\0_{3 \times 3} & \widehat{\omega}
\end{bmatrix}.
\end{align}

Based on this mathematical notation, in our recent publication \cite{farghdani2024singularity}, we presented a theory for deriving singularity-free whole-body equations for an MLR that will be presented in the next section.

\subsection{Problem Statement}

An MLR consists of a main body indexed by $b$, $N$ legs indexed by $i\in\{1, \,...\, ,N\}$, and the $i^{th}$ leg includes $n_i$ links indexed by $j\in\{1, \,...\, ,n_i\}$, so that $l_{ij}$ represents Link $j$ of Leg $i$. Leg $i$ is composed of $n_i$ single-DoF joints, where Joint $j$ connects Body $j$ and $j-1$. The relative configuration manifold of a 1-DoF joint, considered in this work, is a 1-parameter subgroup of $\SE$ that is parameterized by the Lie algebra elements, using the exponential map of $\SE$~\cite{murray1994mathematical}. To each Joint $j$ in Leg $i$, we associate the joint parameter $\theta_{ij}\in\R$ and the twist $\xi_{ij}\in\R^6$ that corresponds to the axis of the relative screw motion between Body $j$ and $j-1$ and observed in the initial configuration of the main body. Note that we consider different degrees of freedom for the legs to model mechanical damage (morphological changes) in the system. Without loss of generality, only a single-branch leg kinematics is considered. The total number of robot's leg DoF is then $N_T:=\sum_{i=1}^{N}n_i$; hence, the system has $6+N_T$ DoF. 

At the Center of Mass (CoM) of the main body, we attach the body-fixed coordinate frame $\mathcal{X}_{b}$, and we reserve the subscript \textit{s} for the spatial coordinate frame. To complete our notation, at the CoM of every Link $l_{ij}$ and Leg $i$'s tip, we attach the frames $\mathcal{X}_{ij}$ and $\mathcal{X}_{it}$, respectively. We coin the constant generalized mass matrices of the main body and every Link $l_{ij}$ in their local frames by $I_b$ and $I_{ij}$, respectively. Let $\theta=[\theta_1^T,\ldots,\theta_N^T]^T\in\R^{N_T}$ be the collection of the leg joint parameters, such that $\theta_i = [\theta_{i1},\ldots,\theta_{in_i}]^T \in\R^{n_i}$ for Leg $i$, and $g_{s,b}\in\SE$ be the configuration of the main body. Then, the collection $Q=(g_{s,b},\theta)$ describes the configuration of the multi-legged robotic system. A rigid multi-body system, as studied in this paper, is visualized in Figure \ref{CHAP4:Leg_Frames}. \par

\begin{figure}[hbt!]
    \centering
    \includegraphics[width=0.5\textwidth]{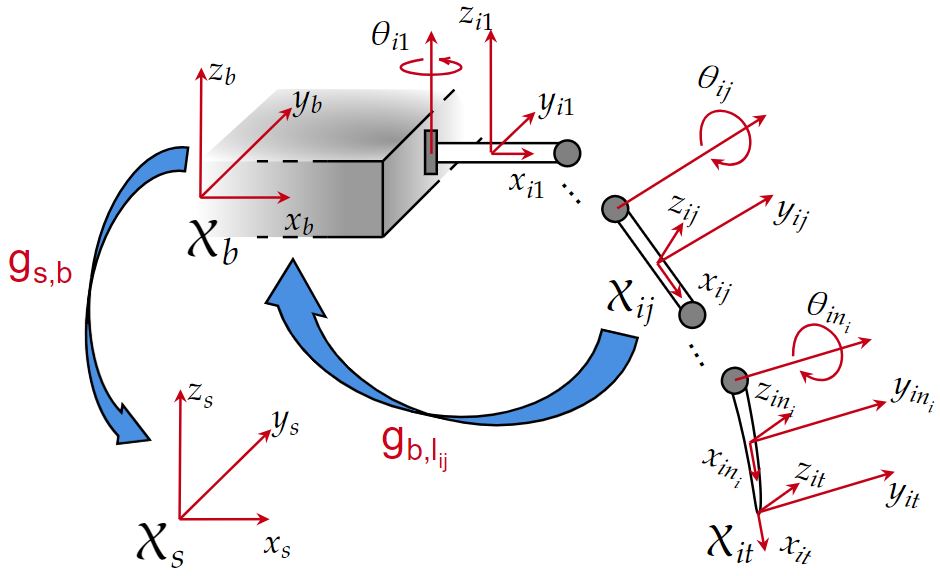}
    \caption{Frames assignment for the main body and Leg $i$}
    \label{CHAP4:Leg_Frames}
\end{figure}
The velocities of the bodies in the system are fully determined given the body velocity $V^b_{s,b}$ of the main body and the joint velocities $\dot\theta$, i.e., the quasi-velocity vector $v=[(V^b_{s,b})^T\,\,\dot\theta^T]^T\in\mathbb R^{6+N_T}$. For every Link $l_{ij}$, we have
\begin{align}\label{eq:Vsl}
    V_{s,l_{ij}}^b:=V_{s,b}^b+J_{ij}(\theta)\dot\theta,
\end{align}
defining the Jacobian $J_{ij}\in\R^{6\times N_T}$ corresponding to $l_{ij}$. This Jacobian calculates the relative velocity of Link $l_{ij}$ with respect to the main body frame and express in the same frame. Therefore, the full state space of the system contains the elements in the form of $(Q,v)$. 

\begin{theorem}[Boltzmann-Hamel equations of legged robots \cite{farghdani2024singularity}] \label{theorem1}
Consider an MLR with a 6-DoF main body, whose configuration is locally parameterized by $\Phi(g_{s,b},\phi):= g_{s,b}e^{\hat\phi}$ (such that $\phi\in\R^{6}$) at $g_{s,b}$, and $N$ legs, each with $n_i$ degrees of freedom. Let $S_b$ define the relation between the local velocities $\dot\phi$ and the body velocity $V^b_{s,b}$ and let $F_t=[(F_{1t}^{1t})^T\,\cdots\, (F_{Nt}^{Nt})^T]^T\in\R^{6 N}$ be the collection of all body wrenches applied at the tip of the legs. The singularity-free Boltzmann-Hamel equations of the system are given by:
\begin{gather}
    M(\theta)\dot v + C(\theta,v) v + \N(Q)= \tau + \mathcal{J}^TF_t, \label{chap4:eq_dyn}
\end{gather}
such that $\begin{bmatrix}
        \tau_b\\ \tau_\theta
    \end{bmatrix}:=\tau\in\R^{6+N_T}$ is the applied forces collocated with the quasi-velocities, the mass matrix 
\begin{align}\nonumber
    &\begin{bmatrix} M_{bb} & M_{b\theta} \\ M_{\theta b} & M_{\theta \theta}\end{bmatrix}\coloneqq M(\theta)  \\ 
    &~~~~~~~~~~~~~~~=\begin{bmatrix} I_b & \0_{6\times N_T} \\ \0_{N_T\times 6} & \0_{N_T\times N_T} \end{bmatrix} + \sum_{i=1}^{N} \sum_{j=1}^{n_i} M_{ij}(\theta),      \label{chap4:M}
\end{align}

\begin{align}\nonumber
 &\begin{bmatrix}
    M^{bb}_{ij} & M^{b\theta}_{ij} \\
    M^{\theta b}_{ij} & M^{\theta \theta}_{ij} \end{bmatrix} := M_{ij}(\theta)   \\  &=\!\!\begin{bmatrix}
  \Ad_{g_{b,l_{ij}}}^{-T}\!  I_{ij}  \Ad_{g_{b,l_{ij}}}^{-1} & \Ad_{g_{b,l_{ij}}}^{-T}\!  I_{ij}  \Ad_{g_{b,l_{ij}}}^{-1}  J_{ij} \\
  J_{ij}^T  \Ad_{g_{b,l_{ij}}}^{-T}\!  I_{ij}  \Ad_{g_{b,l_{ij}}}^{-1} & J_{ij}^T  \Ad_{g_{b,l_{ij}}}^{-T} \! I_{ij}  \Ad_{g_{b,l_{ij}}}^{-1}  J_{ij}
\end{bmatrix}\!\!, \label{chap4:M_link}
\end{align}
and the forces resulted from a potential field $U$ are
\begin{align}\label{chap4:N}
      \begin{bmatrix}
        \N_b\\ \N_\theta
    \end{bmatrix}:=\N(Q)=&\begin{bmatrix}
        S_b^{-T}\left.\frac{\partial U}{\partial\phi}\right|_{\phi=\0}\\ \frac{\partial U}{\partial\theta}
    \end{bmatrix}.
\end{align}
Introducing the momentum vector $ \mathcal{P} = \begin{bmatrix}
    \mathcal{P}_v\\\mathcal{P}_\omega\\\mathcal{P}_\theta
\end{bmatrix}:= Mv$, where $\mathcal{P}_v,\mathcal{P}_\omega\in\R^3$ and $\mathcal{P}_\theta\in\R^{N_T}$, the Coriolis matrix takes the following form
\begin{align}\nonumber
       &\begin{bmatrix} C_{bb} & C_{b\theta} \\ C_{\theta b} & C_{\theta \theta}\end{bmatrix}:=C(\theta,v) \, = \, \sum_{i=1}^{N}\sum_{k=1}^{n_i}\frac {\partial M }{\partial \theta_{ik} } \,\dot\theta_{ik} \\ 
 &~~~~~~~~~~~~~~~~~~~~~~~~~
 -\frac{1}{2} \, \begin{bmatrix}
  2 \tiny\begin{bmatrix} \0_{3\times 3} & \widehat{\mathcal{P}}_v \\ \widehat{\mathcal{P}}_v & \widehat{\mathcal{P}}_\omega \end{bmatrix} & \0_{6\times N_T}\\
  \frac {\partial ^T [\mathcal{P}^T_v~\mathcal{P}_\omega^T]^T }{\partial \theta }  & \frac {\partial ^T \mathcal{P}_\theta}{\partial \theta } 
\end{bmatrix}, \label{chap4:C}
\end{align}
and the Jacobian 
\begin{align} \label{jac}
    \mathcal{J}=\begin{bmatrix}
        \Ad_{g_{b,1t}}^{-1} & \Ad_{g_{b,1t}}^{-1} J_{1t}\\ \vdots & \vdots \\ \Ad_{g_{b,Nt}}^{-1} & \Ad_{g_{b,Nt}}^{-1} J_{Nt}
    \end{bmatrix}\in\R^{6N\times(6+N_T)}.
\end{align}
Here, $g_{b,it}\in\SE$ is the relative configuration of a frame attached to the tip of Leg $i$, and $J_{it}\in \mathbb{R}^{6 \times N_T}$ is the Jacobian for the same tip frame that provides its velocity relative to the main body and expressed in $\mathcal{X}_b$.
\end{theorem}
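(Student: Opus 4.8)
The plan is to assemble the model one rigid body at a time and then apply the Boltzmann-Hamel equations with the mixed set of quasi-velocities $v=[(V^b_{s,b})^T\ \dot\theta^T]^T$: the six components of the main-body body velocity are genuine quasi-velocities on the Lie group $\SE$ (not time derivatives of coordinates), whereas the $N_T$ joint rates $\dot\theta$ are ordinary generalized velocities on the $\R^{N_T}$ factor of the configuration space $Q=(g_{s,b},\theta)$. Concretely I would (i) write the Lagrangian $L=T-U$, (ii) express the kinetic energy as the quadratic form $T=\tfrac12 v^T M(\theta) v$, (iii) specialize the Boltzmann-Hamel equations to this velocity set -- which requires only the structure functions of the chart $\Phi(g_{s,b},\phi)=g_{s,b}e^{\hat\phi}$ -- and (iv) read off $M$, $C$, $\N$ and the wrench term $\mathcal J^T F_t$ by matching coefficients with \eqref{chap4:eq_dyn}.

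For step (ii): by \eqref{eq:Vsl} the body velocity of Link $l_{ij}$ expressed in its own frame is $\Ad_{g_{b,l_{ij}}}^{-1}V^b_{s,l_{ij}}=\Ad_{g_{b,l_{ij}}}^{-1}(V^b_{s,b}+J_{ij}\dot\theta)$, so its kinetic energy $\tfrac12(\,\cdot\,)^T I_{ij}(\,\cdot\,)$ is exactly the quadratic form in $(V^b_{s,b},\dot\theta)$ whose Hessian is the block $M_{ij}(\theta)$ of \eqref{chap4:M_link}; adding the main-body term $\tfrac12 (V^b_{s,b})^T I_b V^b_{s,b}$ and summing over all links gives \eqref{chap4:M}. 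This step is pure bookkeeping once adjoint transport of twists is in place.

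For steps (iii)--(iv): on the $\SE$ factor the chart satisfies $V^b_{s,b}=S_b\dot\phi$ with $S_b|_{\phi=\0}=I$, and because $Q$ is a product manifold the only nonvanishing Hamel coefficients are the structure constants of $\se$, which are carried by $\ad_{V^b_{s,b}}$; the $\theta$ block is holonomic and contributes none. Substituting $T$ and these coefficients into the Boltzmann-Hamel equations produces three groups of velocity-dependent terms: the $\sum_{i,k}(\partial M/\partial\theta_{ik})\dot\theta_{ik}$ term from differentiating the momentum $\mathcal P=Mv$ along the motion; the coadjoint term on the top six rows, which for $\se$ evaluates to $\left[\begin{smallmatrix}\0_{3\times 3} & \widehat{\mathcal P}_v\\ \widehat{\mathcal P}_v & \widehat{\mathcal P}_\omega\end{smallmatrix}\right]v=\ad_{V^b_{s,b}}^{T}\mathcal P_b$ after splitting $\mathcal P_b=[\mathcal P_v^T\ \mathcal P_\omega^T]^T$; and the transpositional rows $-\tfrac12\,\partial^T\mathcal P/\partial\theta$ arising from the nonintegrability of the quasi-velocities relative to $\theta$. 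Collecting them is \eqref{chap4:C}. The conservative force is $-\partial U/\partial q$ written in the quasi-coordinates: on the $\theta$ block this is $\partial U/\partial\theta$ directly, while on the $\SE$ block the chain rule through $\Phi$ gives $S_b^{-T}\,\partial U/\partial\phi|_{\phi=\0}$, i.e. \eqref{chap4:N}. Finally the tip wrenches enter through virtual work: the virtual twist of Leg $i$'s tip seen in $\mathcal X_{it}$ is $\Ad_{g_{b,it}}^{-1}(\delta\phi+J_{it}\delta\theta)$, so duality gives the row blocks $[\Ad_{g_{b,it}}^{-1}\ \ \Ad_{g_{b,it}}^{-1}J_{it}]$ of $\mathcal J$ in \eqref{jac} and the generalized force $\mathcal J^T F_t$.

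The main obstacle is step (iii), the Coriolis matrix. One must (a) track the ``transpositional'' (Hamel) contributions that appear precisely because $V^b_{s,b}$ is not an exact differential, (b) verify that the $\tfrac12$ weights and the doubling of the coadjoint block combine so that $C(\theta,v)v$ reproduces the genuine quadratic velocity terms of the coupled Euler-Lagrange/Euler-Poincar\'e system, and (c) handle the genuinely mixed holonomic/nonholonomic structure, since standard derivations treat either a pure Lie-group system (Euler-Poincar\'e) or a pure holonomic system (Euler-Lagrange) in isolation, whereas here the two are coupled through $M_{b\theta}$ and through the chart $\Phi$. Checking that $\dot M-2C$ has the expected block-skew structure once the group part is removed is the cleanest consistency test and the step requiring the most care.
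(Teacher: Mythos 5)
The paper itself offers no proof of Theorem~\ref{theorem1}; it is imported verbatim from \cite{farghdani2024singularity}, so there is nothing in this document to compare your argument against line by line. Your outline nevertheless follows essentially the route that reference takes: assemble $T=\tfrac12 v^TM(\theta)v$ by adjoint transport of each link twist $\Ad_{g_{b,l_{ij}}}^{-1}(V^b_{s,b}+J_{ij}\dot\theta)$ (which reproduces \eqref{chap4:M_link} and \eqref{chap4:M} exactly), apply the Boltzmann--Hamel equations to the mixed quasi-velocity set so that the $\SE$ factor contributes the coadjoint block --- your identification $\left[\begin{smallmatrix}\0_{3\times3}&\widehat{\mathcal P}_v\\ \widehat{\mathcal P}_v&\widehat{\mathcal P}_\omega\end{smallmatrix}\right]v=\ad^T_{V^b_{s,b}}\mathcal P_b$ is correct and is precisely why the factor $2$ inside the $-\tfrac12$ bracket of \eqref{chap4:C} yields the Euler--Poincar\'e term --- while the holonomic $\theta$ block supplies the $\sum\partial M/\partial\theta_{ik}\,\dot\theta_{ik}$ and $-\tfrac12\,\partial^T\mathcal P/\partial\theta$ pieces, and obtain $\mathcal J^TF_t$ and $\N$ by virtual work and the chain rule through $\Phi$. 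The plan is sound and the identifications check out; the only caveat is that it remains an outline, with the verification you yourself flag in step (iii) (the $\tfrac12$ weights and the block structure of $\dot M-2C$ after removing the group part) stated as a task rather than carried out.
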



\begin{problem}

    A key limitation of the whole-body dynamical equation \eqref{chap4:eq_dyn} is its dependency on the robot’s morphology. When the robot's morphology changes, i.e., a link or leg is added or removed, due to, e.g., mechanical damage, these equations must be entirely rederived. This is beyond simply removing zero rows/columns or contact forces; the structural matrices, such as $M$ and $C$, must also be rederived that involve symbolic differentiation. Therefore, a computational algorithm is necessary to autonomously reshape structural matrices without manual differentiation, enabling real-time adaptation to instantaneous morphological changes caused by mechanical damage. To address this, we introduce a fast modular reformulation of \eqref{chap4:eq_dyn} in the following sections.
\end{problem}

\section{Modular Dynamical Equations of MLRs}\label{chap:Modular}

This section studies the symbolic derivation of the modular whole-body dynamics of MLRs. The concept of modularity discovered in the presented Boltzmann-Hamel Lagrangian dynamics entails the ability to remove modules within the system and to re-derive the corresponding symbolic model without necessitating additional differential manipulations. In this work, individual links and legs are regarded as modules. Our construction takes advantage of the coupled equations of the healthy MLR to rederive the equations using only basic operations such as matrix addition, subtraction, and reshape to capture link or leg loss in the MLR. 

\begin{remark}\label{rem:morph}    
In MLRs, we deal with a special morphological type of connection in the multibody system, i.e., a number of legs considered as serial links branched from a single rigid body (main body). This assumption in morphological type results in special properties of \eqref{chap4:eq_dyn} that are studied in the following sections. 
\end{remark}


\subsection{Modular Mass Matrix for MLRs}
In this section, we symbolically derive a modular form of the mass matrix $M(\theta)$. We start by exploring different properties of the elementary terms appearing in \eqref{chap4:M}.

\begin{lemma} \label{chap4:lemma1}
    For an MLR, the matrix $\Ad_{g_{b,l_{ij}}}$ corresponding to Link $l_{ij}$ appearing in \eqref{chap4:eq_dyn} is only dependent on Leg $i$'s joint states, i.e., {$\theta_i$}. More specifically it is a function of \unboldmath $\theta_{ik}$ for $0 < k \leqslant j$.
\end{lemma}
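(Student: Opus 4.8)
The plan is to exploit the kinematic tree structure of the MLR described in Remark~\ref{rem:morph}: each leg is a serial chain branching off the single main body. The key observation is that the relative configuration $g_{b,l_{ij}}$ from the main-body frame $\mathcal{X}_b$ to the frame $\mathcal{X}_{ij}$ at the CoM of Link $l_{ij}$ is built entirely from the product of exponentials along the path in the kinematic tree connecting the main body to that link. Since this path runs through Joints $i1, i2, \ldots, ij$ of Leg $i$ only (no other leg's joints lie on this path, and Joints $i(j{+}1), \ldots, i n_i$ are distal to Link $l_{ij}$ and hence do not affect its pose relative to the body), the configuration can only depend on $\theta_{i1}, \ldots, \theta_{ij}$.

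Concretely, I would first invoke the Product of Exponentials formula (referenced via \cite{murray1994mathematical, brockett1984robotic, park1994computational} in the preliminaries): writing $g_{b,l_{ij}}(\theta)$ in terms of the fixed initial configuration $g_{b,l_{ij}}(\0)$ and the joint twists $\xi_{ik}$ (expressed in the initial configuration), one obtains
\begin{align}\nonumber
g_{b,l_{ij}}(\theta) = e^{\hat\xi_{i1}\theta_{i1}} e^{\hat\xi_{i2}\theta_{i2}} \cdots e^{\hat\xi_{ij}\theta_{ij}}\, g_{b,l_{ij}}(\0).
\end{align}
This expression manifestly involves only $\theta_{ik}$ for $1 \le k \le j$ (and trivially the $k=0$ endpoint is the fixed main-body frame, consistent with the stated range $0 < k \le j$). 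Applying the Adjoint map, which is a group homomorphism, gives $\Ad_{g_{b,l_{ij}}} = \Ad_{e^{\hat\xi_{i1}\theta_{i1}}} \cdots \Ad_{e^{\hat\xi_{ij}\theta_{ij}}} \Ad_{g_{b,l_{ij}}(\0)}$, so the same dependence is inherited.

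The main obstacle, and the step deserving the most care, is justifying rigorously that no joint outside $\{i1,\ldots,ij\}$ enters the POE product — that is, formalizing the tree-topology argument. This requires being precise about how the frames $\mathcal{X}_{ij}$ are chained (the recursion $g_{b,l_{ij}} = g_{b,l_{i(j-1)}}\, e^{\hat\xi'_{ij}\theta_{ij}}\, g_{l_{i(j-1)},l_{ij}}(\0)$ for an appropriately re-expressed twist $\xi'_{ij}$, with the base case $g_{b,l_{i1}}$ depending only on $\theta_{i1}$) and then closing the induction on $j$. Once the recursion is set up, the inductive step is immediate: $g_{b,l_{ij}}$ depends on $\theta_{i1},\ldots,\theta_{i(j-1)}$ through $g_{b,l_{i(j-1)}}$ by the inductive hypothesis, plus $\theta_{ij}$ through the new exponential factor, and on nothing else. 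I would present the proof in this inductive form, as it most transparently captures why distal joints and other legs' joints are excluded.
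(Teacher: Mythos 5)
Your proposal is correct and follows essentially the same route as the paper: the paper's proof simply writes down the Product of Exponentials formula $g_{b,l_{ij}}(\theta_i)=e^{\widehat{\xi}_{i1}\theta_{i1}}\cdots e^{\widehat{\xi}_{ij}\theta_{ij}}g_{b,l_{ij}}(0)$ and observes that $\Ad_{g_{b,l_{ij}}}$ inherits the dependence on $\theta_{ik}$, $0<k\leq j$, only. Your additional remarks (the Adjoint being a group homomorphism and the inductive formalization of the tree-topology argument) add rigor the paper leaves implicit but do not change the argument.
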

\begin{proof}
Using the exponential joint parameterization, the relative configuration of every leg link $l_{ij}$ can be obtained via the PEO formula \cite{murray1994mathematical}:
\begin{align}
    &g_{b,l_{ij}}({\theta_i})=e^{\widehat{\xi}_{i1} \theta_{i1}} \,\cdots\, e^{\widehat{\xi}_{ij} \theta_{ij}}g_{b,l_{ij}}(0).
\end{align}
Based on this equation, $g_{b,l_{ij}}$ and subsequently  $\Ad_{g_{b,l_{ij}}}$ are only related to the joint states of each leg, i.e., $\theta_i$. 
\end{proof}
\begin{lemma} For an MLR, the Jacobian
    ${J}_{ij} \in \mathbb{R}^{6 \times N_T}$ corresponding to Link $l_{ij}$ appearing in \eqref{chap4:eq_dyn} takes the following form
    \begin{align}\label{chap4:Jij}
    &{J}_{ij} (\theta_i) = \begin{bmatrix} \0_{6\times n_1} \cdots J^{b}_{b,l_{ij}} (\theta_i) \cdots \0_{6\times n_N} \, \end{bmatrix}\in\R^{6 \times N_T}.
\end{align}
    Here $J^{b}_{b,l_{ij}} (\theta_i)\in\R^{6\times n_i}$ is only dependent on Leg $i$'s joint states, i.e., {$\theta_i$}. More specifically, it is a function of \unboldmath $\theta_{ik}, ~0 < k \leqslant j$.
    \label{chap4:lemma2}
\end{lemma}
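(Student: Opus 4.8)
The plan is to read off the structure of $J_{ij}$ directly from its defining identity \eqref{eq:Vsl}, reusing the product-of-exponentials (POE) description of $g_{b,l_{ij}}$ already established in the proof of Lemma~\ref{chap4:lemma1}. By \eqref{eq:Vsl}, $J_{ij}(\theta)\dot\theta$ is the velocity of Link $l_{ij}$ relative to the main body, expressed in $\mathcal{X}_b$. Writing $g_{s,l_{ij}}=g_{s,b}\,g_{b,l_{ij}}$ and differentiating, this relative velocity depends only on $g_{b,l_{ij}}$ and $\dot g_{b,l_{ij}}$ — the main-body pose $g_{s,b}$ cancels, as it must in a relative velocity — and in fact one gets $\big(J_{ij}(\theta)\dot\theta\big)^{\wedge}=\dot g_{b,l_{ij}}\,g_{b,l_{ij}}^{-1}$, the spatial twist of $g_{b,l_{ij}}$ with $\mathcal{X}_b$ as base frame. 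Moreover, by Remark~\ref{rem:morph} the kinematic chain joining $\mathcal{X}_b$ to $\mathcal{X}_{l_{ij}}$ consists precisely of Joints $1,\dots,j$ of Leg $i$, so a priori no other joint rate can contribute.

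Next I differentiate the POE formula $g_{b,l_{ij}}(\theta_i)=e^{\widehat{\xi}_{i1}\theta_{i1}}\cdots e^{\widehat{\xi}_{ij}\theta_{ij}}\,g_{b,l_{ij}}(0)$ from the proof of Lemma~\ref{chap4:lemma1} with respect to time. Only $\dot\theta_{i1},\dots,\dot\theta_{ij}$ appear, and the standard expression for the derivative of a product of matrix exponentials gives $\dot g_{b,l_{ij}}\,g_{b,l_{ij}}^{-1}=\sum_{k=1}^{j}\dot\theta_{ik}\,\big(\Ad_{e^{\widehat{\xi}_{i1}\theta_{i1}}\cdots e^{\widehat{\xi}_{i,k-1}\theta_{i,k-1}}}\xi_{ik}\big)^{\wedge}$, so each transformed twist $\Ad_{e^{\widehat{\xi}_{i1}\theta_{i1}}\cdots e^{\widehat{\xi}_{i,k-1}\theta_{i,k-1}}}\xi_{ik}$ is a function of $\theta_{i1},\dots,\theta_{i,k-1}$ only.

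Finally, since \eqref{eq:Vsl} must hold for every $\dot\theta\in\R^{N_T}$, I match coefficients of each scalar $\dot\theta_{mk}$. The column of $J_{ij}$ paired with $\dot\theta_{mk}$ vanishes whenever $m\neq i$; within the Leg-$i$ block it equals $\Ad_{e^{\widehat{\xi}_{i1}\theta_{i1}}\cdots e^{\widehat{\xi}_{i,k-1}\theta_{i,k-1}}}\xi_{ik}$ for $1\le k\le j$ and is $\0_{6\times 1}$ for $j<k\le n_i$. Collecting the $n_i$ columns of Leg $i$ into $J^b_{b,l_{ij}}(\theta_i)\in\R^{6\times n_i}$ — which is nothing but the spatial manipulator Jacobian of the serial sub-chain formed by the first $j$ joints of Leg $i$, padded with $n_i-j$ zero columns — yields the block form \eqref{chap4:Jij}, with $J^b_{b,l_{ij}}$ manifestly a function of $\theta_i$, and more precisely of $\theta_{ik}$ for $0<k\le j$ only.

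The only genuinely delicate step is the first one: fixing the frame conventions so that $J_{ij}(\theta)\dot\theta$ is seen to depend solely on $g_{b,l_{ij}}$ and $\dot g_{b,l_{ij}}$, with no residual dependence on $g_{s,b}$ or on the other legs. Once that relative-velocity identity is in hand, the remainder is the same routine differentiation of a product of exponentials already used for Lemma~\ref{chap4:lemma1}, so I expect the proof to be short.
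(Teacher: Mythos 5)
Your proposal is correct and follows essentially the same route as the paper: identify $J_{ij}\dot\theta$ as the relative (spatial, base-frame $\mathcal{X}_b$) twist of $g_{b,l_{ij}}$ via \eqref{eq:Vsl}, invoke Remark~\ref{rem:morph} to kill the columns of other legs, and read off the columns $\xi'_{ik}=\Ad_{e^{\widehat{\xi}_{i1}\theta_{i1}}\cdots e^{\widehat{\xi}_{i(k-1)}\theta_{i(k-1)}}}\xi_{ik}$ from the product-of-exponentials formula. The only difference is that you explicitly carry out the differentiation of the product of exponentials, whereas the paper asserts the resulting column form and delegates it to the standard spatial-Jacobian result in Murray \emph{et al.}
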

\begin{proof}
Let the twist $V_{s,l_{ij}}^b\in\R^6$ be the velocity of link $l_{ij}$ in the spatial frame and observed in $\mathcal{X}_b$. Based on \eqref{eq:Vsl}
\begin{equation}
    V^b_{s,l_{ij}} = V^b_{s,b} + V^b_{b,l_{ij}} = V^b_{s,b} + J_{ij}\dot\theta,
\end{equation}
where ${J}_{ij}$ provides the velocity of this link relative to the main body. According to Remark \ref{rem:morph}, 
\begin{align}
    &{J}_{ij} (\theta_i) = \begin{bmatrix} \0_{6\times n_1} \cdots J^{b}_{b,l_{ij}} (\theta_i) \cdots \0_{6\times n_N} \, \end{bmatrix}\in\R^{6 \times N_T},\\
    &J^{b}_{b,l_{ij}} (\theta_i) = [\xi_{i1} ~ \, \xi^{'}_{i2} \dotsb ~ \xi^{'}_{ij} \,\0_{6\times (n_i-j)}]\in\R^{6 \times n_i},
\end{align}
such that
\begin{equation}
     \xi^{'}_{ik} = \Ad_{(e^{\widehat{\xi}_{i1} \theta_{i1}} ~ \dotsb ~e^{\widehat{\xi}_{ik-1} \theta_{i(k-1)}} )} \xi_{ik} ~~~~~~~~~  k \leq j
\end{equation}
is the $k^{th}$ instantaneous joint twist in the frame $\mathcal{X}_{b}$ and $\xi_{ik}$ is constant~\cite{murray1994mathematical}. 
\end{proof}
\par
\begin{remark}
    The form of the Jacobian in \eqref{chap4:Jij} plays a pivotal role in our subsequent results regarding modularity in dynamical equations. According to this equation, not only each leg's Jacobian is a function of the corresponding leg's states, but it also contains nonzero columns only relating the same leg's joint speeds. As a result, the velocity twist of each leg will be fully decoupled from those of the other legs. The coupling appears indirectly by inclusion of the main body velocity in \eqref{eq:Vsl}.
\end{remark}

\begin{corollary}
    The block matrices of the mass matrix $M_{ij} \in \mathbb{R}^{(6 + N_T)\times(6 + N_T)}$ corresponding to Link $l_{ij}$, as defined in \eqref{chap4:M_link}, take the following form:
    \begin{align}
    M^{b\theta}_{i j}\!\!&=\!\!(M^{\theta b}_{i j})^T\!\!\! = \!\!\left[\begin{matrix} \0_{6 \times \sum\limits_{\kappa = 1}^{i-1}\!\!\! n_\kappa}
         & \M^{b\theta}_{i j} & \0_{6 \times \!\!\!\sum\limits_{\kappa = i+1}^{N}\!\!\!\!\! n_\kappa} \end{matrix}\right]\in\R^{6 \times N_T}\!\!, \\
    \M^{b\theta}_{i j} & =M_{ij}^{bb} J_{b,l_{ij}}^b
    =: \begin{bmatrix} [\mathfrak{M}^{b\theta}_{i j} ]_{6 \times j} & \0_{6 \times (n_i - j)} \end{bmatrix}\in\R^{6 \times n_i} ,
\end{align}
\begin{align}
    M^{\theta\theta}_{i j} &=
        \begin{bmatrix}
            \0_{\sum\limits_{\kappa = 1}^{i-1}\!\!\! n_\kappa \times \sum\limits_{\kappa = 1}^{i-1}\!\!\! n_\kappa} & \0 & \0\\
            \0& \mathcal{M}^{\theta \theta}_{i j} & \0 \\
            \0& \0 & \0\!\!\!_{\sum\limits_{\kappa = i+1}^{N}\!\!\!\!\! n_\kappa \times\!\!\! \sum\limits_{\kappa = i+1}^{N}\!\!\!\!\! n_\kappa}
        \end{bmatrix}\in\R^{N_T \times N_T}, \\\nonumber
    \mathcal{M}^{\theta \theta}_{i j} & = (J_{b,l_{ij}}^b)^T  M_{ij}^{bb}  J_{b,l_{ij}}^b\\ &=: \begin{bmatrix}
                [\mathfrak{M}^{\theta \theta}_{i j}]_{j \times j} & \0 \\ \0 & \0_{(n_i - j) \times (n_i - j)}
            \end{bmatrix}\in\R^{n_i \times n_i},
\end{align}
where $\M^{b\theta}_{i j}$ and $\M^{\theta \theta}_{i j}$ are only dependent on $\theta_i$, or more specifically, all \unboldmath $\theta_{ik}, ~0 < k \leqslant j$. Their nonzero parts are denoted by $\mathfrak{M}^{b\theta}_{i j}$ and $\mathfrak{M}^{\theta \theta}_{i j}$, respectively.
    \label{chap4:lemma3}
\end{corollary}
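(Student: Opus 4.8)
The plan is to derive the structural form of each block of $M_{ij}$ by directly substituting the previously established forms of $\Ad_{g_{b,l_{ij}}}$ (Lemma~\ref{chap4:lemma1}) and $J_{ij}$ (Lemma~\ref{chap4:lemma2}) into the block definitions in \eqref{chap4:M_link}, then tracking where the zeros land. First I would recall that $M_{ij}^{bb}=\Ad_{g_{b,l_{ij}}}^{-T} I_{ij}\Ad_{g_{b,l_{ij}}}^{-1}$, which by Lemma~\ref{chap4:lemma1} depends only on $\theta_{ik}$ for $0<k\leqslant j$; this is the common factor appearing in all three off-diagonal/diagonal leg blocks. Then, since $J_{ij}=\begin{bmatrix}\0_{6\times n_1}\cdots J^b_{b,l_{ij}}(\theta_i)\cdots\0_{6\times n_N}\end{bmatrix}$ has nonzero columns only in the block of indices belonging to Leg $i$, the product $M_{ij}^{bb}J_{ij}$ inherits exactly the same column-sparsity pattern: it is zero in the column blocks of Legs $1,\dots,i-1$ and $i+1,\dots,N$, and equals $M_{ij}^{bb}J^b_{b,l_{ij}}$ in the Leg-$i$ block. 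This is precisely the claimed form of $M^{b\theta}_{ij}$, with $\M^{b\theta}_{ij}:=M_{ij}^{bb}J^b_{b,l_{ij}}$. Taking the transpose gives $M^{\theta b}_{ij}=(M^{b\theta}_{ij})^T$, which is immediate because $M_{ij}^{bb}$ is symmetric (it is a congruence transform of the symmetric inertia $I_{ij}$).

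Next I would handle the internal sparsity of $\M^{b\theta}_{ij}$ within the Leg-$i$ block. From Lemma~\ref{chap4:lemma2}, $J^b_{b,l_{ij}}(\theta_i)=[\xi_{i1}~\xi'_{i2}\cdots\xi'_{ij}~\0_{6\times(n_i-j)}]$, i.e., only its first $j$ columns are nonzero. Multiplying on the left by $M_{ij}^{bb}$ does not change which columns vanish, so $\M^{b\theta}_{ij}=\begin{bmatrix}[\mathfrak{M}^{b\theta}_{ij}]_{6\times j}&\0_{6\times(n_i-j)}\end{bmatrix}$, defining $\mathfrak{M}^{b\theta}_{ij}$ as the nonzero $6\times j$ part. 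For the $\theta\theta$ block, I would compute $M^{\theta\theta}_{ij}=J_{ij}^T M_{ij}^{bb} J_{ij}$; since $J_{ij}$ is column-sparse outside Leg $i$, the quadratic form $J_{ij}^T(\cdot)J_{ij}$ is zero outside the diagonal $n_i\times n_i$ block indexed by Leg $i$ and equals $\mathcal{M}^{\theta\theta}_{ij}:=(J^b_{b,l_{ij}})^T M_{ij}^{bb} J^b_{b,l_{ij}}$ there. Within that block, because $J^b_{b,l_{ij}}$ has only its first $j$ columns nonzero, the quadratic form is supported on the leading $j\times j$ principal submatrix, giving $\mathcal{M}^{\theta\theta}_{ij}=\begin{bmatrix}[\mathfrak{M}^{\theta\theta}_{ij}]_{j\times j}&\0\\\0&\0_{(n_i-j)\times(n_i-j)}\end{bmatrix}$. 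The dependence on only $\theta_{ik}$, $0<k\leqslant j$, follows since every ingredient ($M_{ij}^{bb}$, $J^b_{b,l_{ij}}$) has that dependence by Lemmas~\ref{chap4:lemma1}–\ref{chap4:lemma2}.

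The only mildly delicate bookkeeping step is getting the column/row index offsets right in the block partition: the Leg-$i$ sub-block of an $N_T$-dimensional object starts at column $\sum_{\kappa=1}^{i-1}n_\kappa+1$ and has width $n_i$, which is why the zero padding appears as $\0_{6\times\sum_{\kappa=1}^{i-1}n_\kappa}$ on the left and $\0_{6\times\sum_{\kappa=i+1}^{N}n_\kappa}$ on the right (and analogously for the two-sided padding of $M^{\theta\theta}_{ij}$). I expect this indexing accounting, rather than any conceptual difficulty, to be the main obstacle — everything else is a mechanical consequence of substituting the sparsity patterns from the two lemmas into \eqref{chap4:M_link} and invoking symmetry of $M_{ij}^{bb}$. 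I would close by noting that the statement is a corollary precisely because it requires no new computation beyond Lemmas~\ref{chap4:lemma1} and \ref{chap4:lemma2}: it simply reads off the structural consequences of those sparsity patterns for the block decomposition \eqref{chap4:M}.
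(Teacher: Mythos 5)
Your proposal is correct and follows the same route as the paper: the paper's proof simply notes that $M_{ij}$ is built from $\Ad_{g_{b,l_{ij}}}$, $J_{ij}$, and the constant $I_{ij}$, invokes Lemmas~\ref{chap4:lemma1} and \ref{chap4:lemma2} for the $\theta_i$-dependence, and declares the block structure a direct consequence of the sparsity of the Jacobian in \eqref{chap4:Jij}. You have merely written out the column-sparsity bookkeeping and the symmetry of $M_{ij}^{bb}$ explicitly, which the paper leaves implicit.
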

\begin{proof}
From \eqref{chap4:M_link}, $M_{ij}$ consists of three terms: $\Ad_{g_{b,l_{ij}}}, J_{ij}$ and $I_{ij}$. According to {Lemmas} \ref{chap4:lemma1} and \ref{chap4:lemma2}, for Leg $i$, $\Ad_{g_{b,l_{ij}}}$ and $J_{ij}$ are only function of {$\theta_i$} and \unboldmath$I_{ij}$ is constant. The explicit form of the block matrices is the direct consequence of the form of the Jacobian in \eqref{chap4:Jij}.
\end{proof}
\begin{remark}
    The form of individual mass matrices presented in Corollary \ref{chap4:lemma3} will be useful for further derivation of the modular representation of the Mass matrix. 
\end{remark} 
Before proceeding with the derivation of the modular form of the dynamical equations, it is essential to define a morphology vector, which serves to indicate the presence or absence of a leg or link within our formulation. 
\begin{definition}
    We define the \textbf{link morphology vector} $\EX$ for an MLR as:
\begin{align}
    &\EX := \begin{bmatrix} \EX_{leg_1} & \cdots & \EX_{leg_i} & \cdots & \EX_{leg_N}      
    \end{bmatrix}\in\R^{N_T},  \\
    &\EX_{leg_i}= \begin{bmatrix} \ex_{i1} & \cdots & \ex_{ij} & \cdots & \ex_{in_i}      
    \end{bmatrix} \in\R^{n_i}, \\ 
    &\ex_{ij}=\left\{ \begin{array}{rcl}
     1 & \mbox{if the link is present} \\ 
     0 & \mbox{if the link is absent  } 
    \end{array}\right..
    \end{align}
    Similarly, we define the \textbf{leg morphology vector} $\bar\EX$ by:
    \begin{align} 
    &\bar\EX := \begin{bmatrix} \bar\ex_1 & \cdots & \bar\ex_i & \cdots & \bar\ex_N      
    \end{bmatrix}\in\R^{N},  \\
    &\bar\ex_{i}=\left\{ \begin{array}{rcl}
     1 & \mbox{if} ~\sum\limits_{j = 1}^{n_i}\ex_{ij}>0 \\ 
     0 & \mbox{otherwise} 
    \end{array}\right..
\end{align}
\end{definition}
\noindent The vector $\EX$ is a binary vector, divided into $N$ segments corresponding to the robot’s legs. In the absence of damage, all elements of $\EX$ are one, and if a link is missing, the corresponding element is set to zero. We name each binary element $\ex_{ij}$ the \textit{link existence number} for $l_{ij}$. Similarly, the binary vector $\bar\EX$ consists of the \textit{leg existence numbers} $\bar\ex_i$ for all legs. In the following, we use the discovered modularity of the mass matrix to explicitly implement the link/leg existence numbers into the dynamical equations. This allows us to perform real-time damage simulations, i.e., changing the morphology of an MLR within the simulation loop. 


\begin{proposition}\label{proposition1}
    The modular mass matrix of an MLR, including the leg existence numbers is derived as:
    \begin{gather} 
        M(\theta)\!\!=\!\! 
        \small\begin{bmatrix}
            I_b\!\! +\!\! \sum\limits_{i=1}^{N} \bar\ex_{i} M^{bb}_{leg_i}\!\!(\theta_i)  & \bar\ex_{1}\mathcal{M}^{b\theta}_{leg_1}\!\!(\theta_1)  & \cdots & \bar\ex_{N}\mathcal{M}^{b\theta}_{leg_N}\!\!(\theta_N)  \\
            \bar\ex_{1}\mathcal{M}^{\theta b}_{leg_1}\!\!(\theta_1)  &  \bar\ex_{1}\mathcal{M}^{\theta\theta}_{leg_1}\!\!(\theta_1) & \0 & \0 \\
            \vdots & \0 & \ddots & \0 \\
            \bar\ex_{N}\mathcal{M}^{\theta b}_{leg_N}\!\!(\theta_N)  & \0 & \0 & \bar\ex_{N}\mathcal{M}^{\theta\theta}_{leg_N}\!\!(\theta_N)  
        \end{bmatrix} \label{chap4:m_robot_NEW}
    \end{gather}
    where we have defined the modular mass matrix for an individual Leg $i$, including the link existence numbers, by
     \begin{align}
        &\begin{bmatrix}
            M^{bb}_{leg_i} & \0 &  \mathcal{M}^{b\theta}_{leg_i} & \0 \\
            \0 & \0 & \0 & \0 \\
            \mathcal{M}^{\theta b}_{leg_i} & \0 &  \mathcal{M}^{\theta\theta}_{leg_i} & \0 \\
            \0 & \0 & \0 & \0 
        \end{bmatrix}\!\!\!\coloneqq\!\! M_{leg_i}(\theta_i) \!\!\coloneqq\!\! \sum\limits_{j=1}^{n_i} \ex_{ij} M_{ij}(\theta_i)\!. \label{m_leg_NEW}
    \end{align}
\end{proposition}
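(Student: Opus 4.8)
The plan is to assemble the modular mass matrix directly from the definition $M(\theta) = \begin{bmatrix} I_b & \0 \\ \0 & \0 \end{bmatrix} + \sum_{i=1}^N \sum_{j=1}^{n_i} M_{ij}(\theta)$ in \eqref{chap4:M}, inserting the link existence numbers $\ex_{ij}$ as multipliers on each term $M_{ij}$ so that absent links contribute nothing, and then showing that the block structure collapses to \eqref{chap4:m_robot_NEW} because of the decoupling established in Corollary \ref{chap4:lemma3}. First I would define, for each leg, $M_{leg_i}(\theta_i) := \sum_{j=1}^{n_i} \ex_{ij} M_{ij}(\theta_i)$ as in \eqref{m_leg_NEW}, and verify that it has the claimed sparsity pattern: by Corollary \ref{chap4:lemma3}, each $M_{ij}$ has its $(b\theta)$, $(\theta b)$, and $(\theta\theta)$ blocks supported only on the row/column indices belonging to Leg $i$, so the sum over $j$ inherits that support, yielding the block form with $M^{bb}_{leg_i}$, $\mathcal{M}^{b\theta}_{leg_i}$, $\mathcal{M}^{\theta b}_{leg_i}$, $\mathcal{M}^{\theta\theta}_{leg_i}$ in the Leg-$i$ slots and zeros elsewhere. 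Here $M^{bb}_{leg_i} = \sum_j \ex_{ij} M_{ij}^{bb}$, $\mathcal{M}^{b\theta}_{leg_i} = \sum_j \ex_{ij} \M^{b\theta}_{ij}$, etc., with the understanding that $\mathcal{M}^{b\theta}_{leg_i} \in \R^{6\times n_i}$ and $\mathcal{M}^{\theta\theta}_{leg_i}\in\R^{n_i\times n_i}$ are the compressed (Leg-$i$-only) versions.

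Next I would sum over $i$. The $I_b$ term sits in the $(b,b)$ block; the $M^{bb}_{ij}$ pieces all land in the $(b,b)$ block as well, so the top-left entry becomes $I_b + \sum_{i} \sum_j \ex_{ij} M_{ij}^{bb} = I_b + \sum_i M^{bb}_{leg_i}$. Since distinct legs occupy disjoint index blocks in the $\theta$-coordinates, the off-diagonal $(b,\theta)$ strip is the horizontal concatenation of the $\mathcal{M}^{b\theta}_{leg_i}$ (each in its own column block) and the $(\theta,\theta)$ part is block-diagonal with blocks $\mathcal{M}^{\theta\theta}_{leg_i}$ — no cross terms between Leg $i$ and Leg $k$ for $i\neq k$ arise, precisely because $M_{ij}$ never populates another leg's indices. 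This gives exactly the structure of \eqref{chap4:m_robot_NEW} but with $\ex_{ij}$'s rather than $\bar\ex_i$'s.

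The final — and main — step is to reconcile the link existence numbers with the leg existence numbers: I must show that $\bar\ex_i\, M^{bb}_{leg_i} = M^{bb}_{leg_i}$ (with the $\ex_{ij}$'s already inside), and likewise for the other blocks, so that the outer factor $\bar\ex_i$ in \eqref{chap4:m_robot_NEW} is consistent with the inner $\ex_{ij}$ factors in \eqref{m_leg_NEW}. When $\bar\ex_i = 1$ this is trivial. When $\bar\ex_i = 0$, by definition $\sum_j \ex_{ij} = 0$, hence every $\ex_{ij} = 0$ (binary entries), so $M_{leg_i} = 0$ and both sides vanish; thus multiplying by $\bar\ex_i$ is redundant but valid, and writing $\bar\ex_i$ on the outside is the convenient form for implementing whole-leg removal. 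I expect this bookkeeping — keeping straight which matrices are the full $\R^{(6+N_T)}$-sized blocks versus the compressed per-leg blocks, and checking the two layers of existence numbers agree — to be the only real subtlety; the rest follows mechanically from \eqref{chap4:M}, Corollary \ref{chap4:lemma3}, and the disjointness of the per-leg joint index sets noted in Remark \ref{rem:morph}.
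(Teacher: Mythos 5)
Your proposal is correct and follows essentially the same route as the paper: define $M_{leg_i}=\sum_j \ex_{ij}M_{ij}$, invoke Corollary \ref{chap4:lemma3} for the per-leg sparsity pattern, and substitute into \eqref{chap4:M}. Your explicit check that the outer factor $\bar\ex_i$ is consistent with the inner $\ex_{ij}$ (since $\bar\ex_i=0$ forces every $\ex_{ij}=0$) is a detail the paper leaves implicit, but it does not change the argument.
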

\begin{proof}
     According to Lemmas \ref{chap4:lemma1} \& \ref{chap4:lemma2} and Corollary \ref{chap4:lemma3}, the modular mass matrix of an individual Leg $i$, including the link existence numbers, takes the form:
    \begin{align*}
        &\begin{bmatrix}
            M^{bb}_{leg_i} & \0 &  \mathcal{M}^{b\theta}_{leg_i} & \0 \\
            \0 & \0 & \0 & \0 \\
            \mathcal{M}^{\theta b}_{leg_i} & \0 &  \mathcal{M}^{\theta\theta}_{leg_i} & \0 \\
            \0 & \0 & \0 & \0 
        \end{bmatrix}= M_{leg_i}(\theta_i)\\ &~~~~~~~~~ = \sum\limits_{j=1}^{n_i} \ex_{ij} M_{ij}(\theta_i) =
        \sum\limits_{j=1}^{n_i} \ex_{ij}\begin{bmatrix}
            M^{bb}_{ij} & \0 &  \mathcal{M}^{b\theta}_{ij} & \0 \\
            \0 & \0 & \0 & \0 \\
            \mathcal{M}^{\theta b}_{ij} & \0 &  \mathcal{M}^{\theta\theta}_{ij} & \0 \\
            \0 & \0 & \0 & \0 
        \end{bmatrix}, 
    \end{align*}
    where the leg level blocks of the modular mass matrix are denoted by $M^{bb}_{leg_i}$, $\mathcal{M}^{b\theta}_{leg_i}$, $\mathcal{M}^{\theta b}_{leg_i}$, and $\mathcal{M}^{\theta\theta}_{leg_i}$. Substituting this equation in \eqref{chap4:M}, we arrive at \eqref{chap4:m_robot_NEW}.
\end{proof}
\begin{remark}
    We incorporate two levels of modularity in the MLR's mass matrix \eqref{chap4:m_robot_NEW}: leg level and link level. If an entire leg is removed from the MLR, the corresponding leg existence number is set to 0; and if a link is removed from a leg, the corresponding link existence number is set to 0 while the leg existence number is 1.
\end{remark}

\subsection{Modular Coriolis Matrix for MLRs}

The Coriolis matrix of an MLR was introduced in \eqref{chap4:C}. Utilizing the mass matrix in \eqref{chap4:m_robot_NEW}, we aim to derive a closed form modular version of the Coriolis matrix, in this section. To this end, we first derive the closed form equations for the partial derivatives of the block components of the individual mass matrix of each link.

To simplify our notation, we define $_i\Add^k_j \in \mathbb R^{6\times6}$ as the accumulated Adjoint transformations given by the function:
\begin{equation}\label{Ad_new}
    _i\Add^k_j= \,  \left\{ \begin{array}{cl}
    \Ad^{-1}_{(e^{\widehat{\xi}_{ik} \theta_{ik}} \text{  } \dotsb \text{  }e^{\widehat{\xi}_{ij} \theta_{ij}} )}, &  0 < k \leq j \\ \1_{6\times6} & k = 0
    \\  \0_{6\times6}  & k > j
    \end{array}\right. 
\end{equation}
that can be substituted in \eqref{chap4:M_link} to arrive at
\begin{align}
      M_{ij}(\theta_i) \!\!=\!\! 
\begin{bmatrix}
  (_i\Add^1_j)^T \cI_{ij} \,_i\Add^1_j & (_i\Add^1_j)^T \cI_{ij} \,_i\Add^1_j \, J_{ij} \\
  J_{ij}^T \, (_i\Add^1_j)^T \cI_{ij} \,_i\Add^1_j & J_{ij}^T \, (_i\Add^1_j)^T \cI_{ij} \,_i\Add^1_j \, J_{ij}
\end{bmatrix}, \label{Mij_new2}
\end{align}
where $\cI_{ij} \coloneqq \Ad_{g_{b,l_{ij}}(0)}^{-T} \, I_{ij} \, \Ad_{g_{b,l_{ij}}(0)}^{-1}\in\R^{6\times 6}$.

\begin{lemma}    \label{lem:3}
    The partial derivatives of the mass matrix corresponding to Link $l_{ij}$ with respect to the joint states of Leg $i$ can be calculated in closed form by ($k=1,\cdots,j$):
    \small
    \begin{align} 
        \frac {\partial  M_{ij}^{bb}}{\partial \theta_{ik} } &= \left( \frac {\partial \,_i\Add^1_j }{\partial \theta_{ik} } \right)^T \cI_{ij} \,_i\Add^1_j + (\,_i\Add^1_j)^T \cI_{ij} \frac {\partial  \,_i\Add^1_j }{\partial \theta_{ik} } , \\ \nonumber
        \frac {\partial  \M_{ij}^{b\theta}}{\partial \theta_{ik} } &= (\frac {\partial  \M_{ij}^{\theta b}}{\partial \theta_{ik} })^T =  
        \left( \frac {\partial  \,_i\Add^1_j }{\partial \theta_{ik} } \right)^T \cI_{ij} \,_i\Add^1_j  J_{b,l_{ij}}^b \\ 
        &+ (\,_i\Add^1_j)^T \cI_{ij} \frac {\partial  \,_i\Add^1_j }{\partial \theta_{ik} }  J_{b,l_{ij}}^b + (\,_i\Add^1_j)^T \cI_{ij} \,_i\Add^1_j \frac {\partial  J_{b,l_{ij}}^b }{\partial \theta_{ik} }, \\ \nonumber
        \frac {\partial  \M_{ij}^{\theta \theta}}{\partial \theta_{ik} } &= \left( \frac {\partial  J_{b,l_{ij}}^b }{\partial \theta_{ik} } \right)^T (\,_i\Add^1_j)^T \cI_{ij} \,_i\Add^1_j J_{b,l_{ij}}^b + \\ \nonumber
        & (J_{b,l_{ij}}^b)^T  \left( \frac {\partial  _i\Add^1_j }{\partial \theta_{ik} } \right)^T \cI_{ij} \,_i\Add^1_j  J_{b,l_{ij}}^b + \\ \nonumber
        &  (J_{b,l_{ij}}^b)^T  (\,_i\Add^1_j)^T \cI_{ij} \frac {\partial  \,_i\Add^1_j }{\partial \theta_{ik} }  J_{b,l_{ij}}^b  + \\
        &(J_{b,l_{ij}}^b)^T  (\,_i\Add^1_j)^T \cI_{ij} \,_i\Add^1_j \frac {\partial  J_{b,l_{ij}}^b }{\partial \theta_{ik} }.
    \end{align}
    \normalsize
    Here, all partial derivatives can be calculated from
    \begin{gather} \nonumber
        \frac {\partial \, J^{b}_{b,l_{ij}} }{\partial \theta_{ik} } = \begin{bmatrix}
            \0_{6\times1} \text{  } \, \frac {\partial \, \xi^{'}_{i2} }{\partial \theta_{ik} }  \dotsb \text{  } \frac {\partial \, \xi^{'}_{ij} }{\partial \theta_{ik} } \,\0_{6\times (n_i-j)}  \end{bmatrix} ,\\ 
            \frac {\partial  \xi^{'}_{i\beta} }{\partial \theta_{ik} } = \left\{ \begin{array}{ccl}
        (\,_i\Add^{1}_k)^{-1} \ad_{\widehat{\xi}_{ik}} (\,_i\Add^{k+1}_{\beta-1})^{-1} \xi_{i\beta}  & k < \beta -1 \\ 
         \0_{6 \times 6 }  & k\geq \beta -1
        \end{array}\right. \\
        \frac {\partial \, _i\Add^1_j }{\partial \theta_{ik} } = \,  \left\{ \begin{array}{ccl}
        -\,_i\Add^{k-1}_j \ad_{\widehat{\xi}_{ik}} \,_i\Add^{1}_k  & k < j \\ 
         \0_{6 \times 6 }  & k\geq j
        \end{array}\right. 
    \end{gather} 
\end{lemma}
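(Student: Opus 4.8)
The plan is to isolate the only two $\theta$-dependent building blocks inside $M_{ij}$, namely the accumulated Adjoint $_i\Add^1_j$ and the leg Jacobian $J^b_{b,l_{ij}}$, to differentiate each of them in closed form, and then to propagate the result through the block structure of $M_{ij}$ by the ordinary product rule.

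First I would start from the closed form \eqref{Mij_new2} together with Corollary \ref{chap4:lemma3}, which express the three non-trivial blocks of $M_{ij}$ as $M^{bb}_{ij}=(\,_i\Add^1_j)^T\cI_{ij}\,_i\Add^1_j$, $\M^{b\theta}_{ij}=M^{bb}_{ij}J^b_{b,l_{ij}}$, and $\M^{\theta\theta}_{ij}=(J^b_{b,l_{ij}})^T M^{bb}_{ij}J^b_{b,l_{ij}}$. The factor $\cI_{ij}=\Ad^{-T}_{g_{b,l_{ij}}(0)}I_{ij}\Ad^{-1}_{g_{b,l_{ij}}(0)}$ is a constant, and by Lemmas \ref{chap4:lemma1} and \ref{chap4:lemma2} all the $\theta$-dependence sits in $_i\Add^1_j$ and $J^b_{b,l_{ij}}$, both functions of $\theta_{i1},\dots,\theta_{ij}$ only; hence $\partial M_{ij}/\partial\theta_{ik}=\0$ for $k>j$, and for $k\in\{1,\dots,j\}$ the Leibniz rule — treating $\cI_{ij}$ as a constant factor and re-expanding $\partial M^{bb}_{ij}/\partial\theta_{ik}$ wherever it appears — produces exactly the two-term, three-term, and four-term identities stated for $\partial M^{bb}_{ij}/\partial\theta_{ik}$, $\partial\M^{b\theta}_{ij}/\partial\theta_{ik}$, and $\partial\M^{\theta\theta}_{ij}/\partial\theta_{ik}$. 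This step is purely mechanical and reduces the lemma to computing $\partial\,_i\Add^1_j/\partial\theta_{ik}$ and $\partial J^b_{b,l_{ij}}/\partial\theta_{ik}$.

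For $\partial J^b_{b,l_{ij}}/\partial\theta_{ik}$ I would differentiate column by column. By Lemma \ref{chap4:lemma2} the columns are the constant $\xi_{i1}$, the twists $\xi'_{i\beta}=\Ad_{(e^{\widehat{\xi}_{i1}\theta_{i1}}\cdots e^{\widehat{\xi}_{i,\beta-1}\theta_{i,\beta-1}})}\xi_{i\beta}$ for $2\le\beta\le j$ (each $\xi_{i\beta}$ constant), and trailing zero columns, which already gives the stated block shape of $\partial J^b_{b,l_{ij}}/\partial\theta_{ik}$. Each column derivative then follows from two standard facts on $\SE$: $\Ad$ is a homomorphism, $\Ad_{gh}=\Ad_g\Ad_h$ (so $\Ad^{-1}_{g_1\cdots g_m}=\Ad^{-1}_{g_m}\cdots\Ad^{-1}_{g_1}$), and $\Ad_{e^{\widehat{\xi}\theta}}=e^{\theta\,\ad_\xi}$, whence $\tfrac{d}{d\theta}\Ad_{e^{\widehat{\xi}\theta}}=\ad_\xi\Ad_{e^{\widehat{\xi}\theta}}=\Ad_{e^{\widehat{\xi}\theta}}\ad_\xi$ and $\tfrac{d}{d\theta}\Ad^{-1}_{e^{\widehat{\xi}\theta}}=-\ad_\xi\Ad^{-1}_{e^{\widehat{\xi}\theta}}=-\Ad^{-1}_{e^{\widehat{\xi}\theta}}\ad_\xi$. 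Writing $\xi'_{i\beta}$ as the ordered product $\Ad_{e^{\widehat{\xi}_{i1}\theta_{i1}}}\cdots\Ad_{e^{\widehat{\xi}_{i,\beta-1}\theta_{i,\beta-1}}}\xi_{i\beta}$, only the $k$-th factor depends on $\theta_{ik}$; differentiating it inserts one $\ad_{\widehat{\xi}_{ik}}$, and re-collecting the partial products to its left and to its right against the piecewise definition \eqref{Ad_new} of $_i\Add^{\cdot}_{\cdot}$ yields $\partial\xi'_{i\beta}/\partial\theta_{ik}=(\,_i\Add^1_k)^{-1}\ad_{\widehat{\xi}_{ik}}(\,_i\Add^{k+1}_{\beta-1})^{-1}\xi_{i\beta}$, the complementary range collapsing to $\0$. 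Applying the identical argument to $_i\Add^1_j=\Ad^{-1}_{e^{\widehat{\xi}_{ij}\theta_{ij}}}\cdots\Ad^{-1}_{e^{\widehat{\xi}_{i1}\theta_{i1}}}$ — differentiate the single factor containing $\theta_{ik}$, which produces a $-\ad_{\widehat{\xi}_{ik}}$, then read off the surviving left and right partial products from \eqref{Ad_new} — gives the stated closed form for $\partial\,_i\Add^1_j/\partial\theta_{ik}$ together with its $\0$ branch. Substituting these two elementary derivatives back into the product-rule identities of the first step closes the proof.

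The only genuine difficulty I anticipate is the index bookkeeping, not the analysis: one must track exactly which exponential factor survives each differentiation, verify that the left- and right-hand partial products regroup cleanly onto the accumulated-Adjoint symbols in \eqref{Ad_new}, and treat the boundary indices ($k=0$, $k\ge j$, $k\ge\beta-1$) consistently with that piecewise definition. The underlying Lie-group facts — $\Ad$ a homomorphism and $\Ad_{e^{\widehat{\xi}\theta}}=e^{\theta\,\ad_\xi}$ — are completely standard and carry the technical weight.
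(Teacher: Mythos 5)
Your proposal is correct and follows essentially the same route as the paper's proof: reduce everything by the product rule to the derivatives of the two $\theta$-dependent factors $_i\Add^1_j$ and $J^b_{b,l_{ij}}$, then differentiate the ordered exponential products so that only the $k$-th factor contributes, regrouping the surviving partial products onto the accumulated-Adjoint symbols of \eqref{Ad_new}. The paper obtains the $\ad_{\widehat{\xi}_{ik}}$ term by expanding the Adjoint as a conjugation and collecting the commutator explicitly, whereas you invoke $\Ad_{e^{\widehat{\xi}\theta}}=e^{\theta\,\ad_\xi}$ directly; these are the same computation.
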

\begin{proof}
    The proof is presented in Appendix \ref{app:mass}.
\end{proof}
\begin{remark}
    For Link $l_{ij}$, the partial derivative of the mass matrix $M_{ij}$ is zero for any $k>j$. 
\end{remark}

\begin{proposition}\label{proposition2}
    The modular Coriolis matrix of an MLR can be derived in closed form, such that
    \begin{align}
       C(\theta,v) = \mathcal{A} - \frac{1}{2} \begin{bmatrix} \mathcal{B} & \0 \\ \mathcal{D} & \mathcal{E} \end{bmatrix} \label{c_m}
\end{align}
where
\begin{align} \label{eq:cA}
    &\mathcal{A} = \sum\limits_{i=1}^{N}\bar\ex_i \sum\limits_{j=1}^{n_i} \left( \sum\limits_{k=1}^{j} \ex_{ij} \frac {\partial M_{ij}(\theta_i) }{\partial \theta_{ik} } \right)\,\dot\theta_{ik}, \\ \label{eq:cB}
    &\mathcal{B} = 2  \begin{bmatrix} \0_{3\times 3} & \widehat{\mathcal{P}_v} \\ \widehat{\mathcal{P}_v} & \widehat{\mathcal{P}_\omega} \end{bmatrix} , \\ \label{eq:cD}
    &\mathcal{D} = \frac {\partial ^T [\mathcal{P}^T_v~\mathcal{P}_\omega^T]^T }{\partial \theta }, \\ \label{eq:cE} 
    &\mathcal{E} = \frac {\partial ^T \mathcal{P}_\theta}{\partial \theta }, 
\end{align} 
and based on Lemma \ref{lem:3}, we compute 
\begin{gather} \label{eq:P}
    \mathcal{P} \!\!=\!\! \begin{bmatrix}
    \mathcal{P}_v\\\mathcal{P}_\omega\\\mathcal{P}_\theta
\end{bmatrix} \!\!=\!\! \begin{bmatrix}
        I_b V^b_{s,b} + \sum\limits_{i=1}^{N} \bar\ex_i\sum\limits_{j=1}^{n_i} \ex_{ij}( M_{ij}^{bb} V^b_{s,b} +  \ \mathcal{M}_{ij}^{b\theta} \dot\theta_i ) \\ \begin{matrix} \bar\ex_1\sum\limits_{j=1}^{n_1} \ex_{1j} \bigg( \mathcal{M}^{\theta b}_{1j} V^b_{s,b} + \mathcal{M}^{\theta \theta}_{1j} \dot\theta_1 \bigg) \\ \vdots \\ \bar\ex_N\sum\limits_{j=1}^{n_N} \ex_{Nj} \bigg( \mathcal{M}^{b\theta}_{Nj} V^b_{s,b} +  \mathcal{M}^{\theta \theta}_{Nj} \dot\theta_N \bigg) \end{matrix} \end{bmatrix},
\end{gather}
\begin{align}\label{eq:parP1}
    &\frac {\partial  [\mathcal{P}^T_v~\mathcal{P}_\omega^T]^T }{\partial \theta_{\kappa\beta} }= \bar\ex_\kappa \sum\limits_{j=1}^{n_\kappa} \ex_{\kappa j}(\frac{\partial M_{\kappa j}^{bb}}{\partial \theta_{\kappa\beta}}  V^b_{s,b} +  \frac{\partial \mathcal{M}_{\kappa j}^{b\theta}}{\partial \theta_{\kappa\beta}} \dot\theta_\kappa ),\\
    &\frac {\partial  \mathcal{P}_\theta}{\partial \theta_{\kappa\beta} }=\begin{bmatrix}
          \begin{matrix} \0_{n_i\times 1} \\ \vdots \\ \bar\ex_\kappa\sum\limits_{j=1}^{n_\kappa} \ex_{\kappa j} \bigg( \frac{\partial \mathcal{M}^{\theta b}_{\kappa j}}{\partial \theta_{\kappa\beta}}  V^b_{s,b} + \frac{\partial \mathcal{M}^{\theta \theta}_{\kappa j}}{\partial \theta_{\kappa\beta}} \dot\theta_\kappa \bigg)\\ \vdots\\\0_{n_N\times 1} \end{matrix} \end{bmatrix}.\label{eq:parP2}
\end{align}

\end{proposition}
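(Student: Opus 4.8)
The plan is to obtain \eqref{c_m} by substituting the modular mass matrix \eqref{chap4:m_robot_NEW} of Proposition \ref{proposition1} into the general Boltzmann--Hamel Coriolis expression \eqref{chap4:C} and then exploiting the block-sparse structure established in Lemmas \ref{chap4:lemma1}, \ref{chap4:lemma2}, \ref{lem:3} and Corollary \ref{chap4:lemma3}. Comparing \eqref{chap4:C} with \eqref{c_m} term by term, $\mathcal{A}$ must equal the derivative-of-mass sum $\sum_{i=1}^{N}\sum_{k=1}^{n_i}(\partial M/\partial\theta_{ik})\dot\theta_{ik}$, while $\mathcal{B}$, $\mathcal{D}$, $\mathcal{E}$ are exactly the blocks of the second matrix in \eqref{chap4:C} and are left structurally untouched. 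The work is therefore threefold: (i) rewrite the derivative-of-mass sum as \eqref{eq:cA}; (ii) express the momentum $\mathcal{P}=Mv$ in the modular block form \eqref{eq:P}; and (iii) differentiate $\mathcal{P}$ componentwise to obtain the closed forms \eqref{eq:parP1}--\eqref{eq:parP2} that populate $\mathcal{B}$, $\mathcal{D}$, $\mathcal{E}$.

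For step (i), Proposition \ref{proposition1} gives $M(\theta)=\diag(I_b,\0_{N_T\times N_T})+\sum_{i=1}^{N}\bar\ex_i\sum_{j=1}^{n_i}\ex_{ij}M_{ij}(\theta_i)$ once each leg block \eqref{m_leg_NEW} is reshaped into its slot of the full $(6+N_T)\times(6+N_T)$ layout. Since $I_b$ is constant, the existence numbers $\bar\ex_i,\ex_{ij}$ are constants, and (by Lemmas \ref{chap4:lemma1} and \ref{chap4:lemma2}) $M_{ij}$ depends only on $\theta_i$ --- and on $\theta_{ik}$ only for $k\le j$ --- we get $\partial M/\partial\theta_{ik}=\bar\ex_i\sum_{j=k}^{n_i}\ex_{ij}\,\partial M_{ij}/\partial\theta_{ik}$. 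Multiplying by $\dot\theta_{ik}$, summing over $i$ and $k$, and swapping the order of the $j$ and $k$ summations reproduces \eqref{eq:cA}, with each $\partial M_{ij}/\partial\theta_{ik}$ supplied in closed form by Lemma \ref{lem:3}.

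For steps (ii)--(iii), write $v=[(V^b_{s,b})^T\ \dot\theta^T]^T$ and multiply out $Mv$ using $M$ in the arrowhead form \eqref{chap4:m_robot_NEW}. The column-sparsity of the link Jacobians \eqref{chap4:Jij} forces $M^{b\theta}_{ij}\dot\theta=\M^{b\theta}_{ij}\dot\theta_i$ and $M^{\theta\theta}_{ij}\dot\theta=\M^{\theta\theta}_{ij}\dot\theta_i$, so the top ($6$-dimensional) block of $\mathcal{P}$ collapses to $I_bV^b_{s,b}+\sum_i\bar\ex_i\sum_j\ex_{ij}(M^{bb}_{ij}V^b_{s,b}+\M^{b\theta}_{ij}\dot\theta_i)$ and the $i$-th leg block of $\mathcal{P}_\theta$ to $\bar\ex_i\sum_j\ex_{ij}(\M^{\theta b}_{ij}V^b_{s,b}+\M^{\theta\theta}_{ij}\dot\theta_i)$; this is \eqref{eq:P}, and $\mathcal{B}$ in \eqref{eq:cB} then only requires $\mathcal{P}_v,\mathcal{P}_\omega$, which are the two $3$-vector halves of that top block. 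Differentiating \eqref{eq:P} with respect to one joint coordinate $\theta_{\kappa\beta}$, every term with leg index $\neq\kappa$ vanishes (dependence on $\theta_i$ alone) and within leg $\kappa$ only the summands with $j\ge\beta$ survive (the remark after Lemma \ref{lem:3}), which yields \eqref{eq:parP1} for the main-body momentum and the single-nonzero-block vector \eqref{eq:parP2} for $\mathcal{P}_\theta$; the inner derivatives $\partial M^{bb}_{\kappa j}/\partial\theta_{\kappa\beta}$, $\partial\M^{b\theta}_{\kappa j}/\partial\theta_{\kappa\beta}$, and $\partial\M^{\theta\theta}_{\kappa j}/\partial\theta_{\kappa\beta}$ come from Lemma \ref{lem:3}. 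Stacking \eqref{eq:parP1}--\eqref{eq:parP2} over all $\kappa,\beta$ forms the transposed Jacobians $\mathcal{D}$ and $\mathcal{E}$, and reassembling $\mathcal{A}$, $\mathcal{B}$, $\mathcal{D}$, $\mathcal{E}$ into \eqref{chap4:C} gives \eqref{c_m}.

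I expect the main obstacle to be bookkeeping rather than anything conceptual: one must keep three truncation layers consistent --- the leg factor $\bar\ex_\kappa$, the link factor $\ex_{\kappa j}$, and the $j\ge k$ (respectively $j\ge\beta$) cut-off coming from $\Ad_{g_{b,l_{ij}}}$ and $J^b_{b,l_{ij}}$ depending on $\theta_{ik}$ only for $k\le j$ --- while pushing the differentiation through the reshaped block layout, and one must certify that no cross-leg coupling is ever generated. That last point is precisely where the column-sparse Jacobian \eqref{chap4:Jij} and the vanishing of $\partial M_{ij}/\partial\theta_{ik}$ for $k>j$ do the essential work, so the argument should establish those two facts first and only then perform the index manipulations.
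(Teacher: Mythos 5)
Your proposal follows essentially the same route as the paper's own proof: it first rewrites the derivative-of-mass sum in \eqref{chap4:C} using the modular decomposition $M = M_b + \sum_i \bar\ex_i \sum_j \ex_{ij} M_{ij}$, the constancy of $I_b$, and the vanishing of $\partial M_{ij}/\partial\theta_{ik}$ for $k>j$ to get \eqref{eq:cA}, then expands $\mathcal{P}=Mv$ blockwise from \eqref{chap4:m_robot_NEW} to get \eqref{eq:P}, with \eqref{eq:parP1}--\eqref{eq:parP2} following by direct differentiation. Your treatment of the final differentiation step is in fact slightly more explicit than the paper's, which simply declares it a straightforward computation.
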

\begin{proof}
    The proof is presented in Appendix \ref{app:PRO2}.
\end{proof}

\begin{remark}
    Note that the nonzero terms in \eqref{eq:parP1} and \eqref{eq:parP2} only appear when $i=\kappa$ and $\beta\le j$, due to the modularity discovered in the dynamical equations. This property considerably increases the speed of computation of the Coriolis matrix using the closed form equations in Lemma \ref{lem:3}. 
\end{remark}

\subsection{Forming the Dynamical Equations of a Damaged MLR}

In this section, we explain how the modularity introduced in the previous sections enables the rederivation of the equations of motion for a damaged MLR within a simulation loop, utilizing only matrix addition, subtraction, and reshaping, without requiring any differentiation operations. 
To achieve this, we begin by the healthy morphology of the MLR specifying the number of legs $N$ and the geometry of each leg ($n_i,\xi_{ij}$)---number of DoF and joint screws. We then input the existence numbers of links and legs. If the existence number of Leg $i$ is zero, the leg is no longer part of the MLR. Consequently, the first step is to remove $M_{leg_i}$ from the derivation of the MLR mass matrix and compute the updated mass matrix $M$. A similar process is applied if one or more links of Leg $i$ are missing. However, instead of removing the entire $M_{leg_i}$, only the corresponding $M_{ij}$ terms for the missing links are excluded, and the updated $M_{leg_i}$ is incorporated into the computation of $M$.  

The next step is to compute the updated Coriolis matrix $C$. This requires removing the derivatives of the missing $M_{ij}$ terms from the calculation of $\mathcal{A}$ in \eqref{eq:cA} and $\mathcal{P}$ in \eqref{eq:P}, followed by reconstructing the Coriolis matrix. Although the effects of missing links or legs are accounted for in the mass and Coriolis matrices, their dimensions must also be adjusted by eliminating the corresponding zero rows and columns. Finally, the remaining vectors in the dynamic equation \eqref{chap4:eq_dyn} must be updated by removing the rows corresponding to the missing links or legs. This process is summarized in Algorithm \ref{algo:modul}.  
\begin{algorithm} 
\caption{Model reconfiguration for morphology changes}
\begin{algorithmic}[1] \label{algo:modul}
    \REQUIRE The morphology of the healthy MLR, i.e., $N,n_i,\xi_{ij}$
\REQUIRE Morphology vectors $\bar\EX$ and $\EX$ 
\STATE Set $M,C,\mathcal{A}, \mathcal{B}, \mathcal{D}, \mathcal{E}, \mathcal{P}$ to zero with appropriate dimensions from the healthy morphology
\FOR{$i = 1$ to $N$} 
    \STATE $M_{leg_i}=\0$ 
    \IF{$\bar\ex_{i} = 1$} 
        \IF{$\EX_{leg_i}$ is a valid morphology}
            \FOR{$j = 1$ to $n_i$}
                \IF{$\ex_{ij}=1$}
                    \STATE Compute blocks of $M_{ij}$ based on Corollary \ref{chap4:lemma3}
                    \STATE $M_{leg_i}\leftarrow M_{leg_i} + M_{ij}$ based on \eqref{m_leg_NEW}
                    \STATE Update $\mathcal{P}, \mathcal{B}$ based on \eqref{eq:P}, \eqref{eq:cB}
                    \FOR{$k=1$ to $j$}
                        \STATE Compute $\frac{\partial M_{ij}}{\partial \theta_{ik}}$ based on Lemma \ref{lem:3}.
                        \STATE $\mathcal{A}\leftarrow \mathcal{A} + \frac{\partial M_{ij}}{\partial \theta_{ik}}$ based on \eqref{eq:cA}
                        \STATE Update $\mathcal{D}, \mathcal{E}$ based on \eqref{eq:cD}, \eqref{eq:cE}
                    \ENDFOR     
                    \STATE Update Coriolis matrix $C$ \eqref{c_m}
                \ENDIF
            \ENDFOR    
        \ELSE
           \STATE Error!
        \ENDIF
    \ENDIF
    \STATE Assemble $M$ based on \eqref{chap4:m_robot_NEW}
\ENDFOR
\STATE Reshape $M$ and $C$ by removing rows/columns corresponding to zero existence numbers
\STATE Remove rows of $\mathcal{J}$ corresponding to removed tip links
\STATE Reshape vectors $\N,v,\dot v,\tau,F_t$
\RETURN Reduced order rederived $M,C,\mathcal{J},\N,v,\dot v,\tau, F_t$
\end{algorithmic}
\end{algorithm}

\begin{remark}
    A morphology change can be accommodated by simply updating the morphology vectors to compute the updated modular mass and Coriolis matrices. This update automatically skips the computation of all terms related to the missing bodies, as the existence numbers are embedded in all steps of the derivation of the healthy model. Note that only the computations corresponding to the present links and legs are performed and no additional differentiation operations or manual modifications of the equations are required. This enables instantaneous model adjustments within the simulation loop. 
\end{remark}

\section{FAST Modular Dynamic Equations of Multi-Legged Robots}\label{chap:fast}

In the previous section, we derived a modular version of the dynamical equations for a given healthy MLR, allowing direct implementation of morphological changes into the equations. The main focus of this section is on constructing the model of a healthy MLR. The proposed methodology takes advantage of symbolic computation and symmetry in the geometry of the system modules to offer an inherently fast and modular set of dynamical equations. When combined with the equations from the previous section, this model can not only account for damage or loss of modules but also facilitate the construction of an MLR model by attaching similar legs at different locations of the main body. 

Let $\mathcal{L}=\{(n_\alpha,\xi_{\alpha j})|\alpha=1,\cdots, \bar N, j=1,\cdots,n_\alpha\}$ be a list of $\bar N$ unique leg morphologies (including number of leg DoF and joint screws) from which we select the types of legs connected to the MLR's main body. For every leg morphology $\alpha$ in $\mathcal{L}$ attached at an arbitrary symbolic location on the main body, we construct the symbolic version of the link mass matrices $M_{\alpha j}$, the matrix $\N_\alpha$, and the Jacobian $\mathcal{J}_\alpha$ from \eqref{chap4:M_link}, \eqref{chap4:N}, and \eqref{jac}. Note that $\N_\alpha$ and $\mathcal{J}_\alpha$ are the gravitational forces and the Jacobian exclusively corresponding to Leg $\alpha$. The structure of these symbolic matrices is independent of the leg attachment location, allowing them to be numerically evaluated within a simulation loop multiple times to capture multiple instances of a single leg morphology attached at various positions on the main body. It is important to note that the level of symbolic representation in the equations is arbitrary. By symbolically inserting the constant joint twists, the leg morphology itself can also vary within the simulation loop, provided that the number of leg DoF remains fixed. Therefore, a single symbolic computation of the structural matrices for a given leg morphology is sufficient to simulate the motion of the entire MLR.


\begin{remark}
    Note that we can one-by-one connect the legs to the main body and progressively construct the equations of motion, due to the modularity presented in the previous section. In this construction, we also need the assumption that the vector $\N$ corresponds to a constant gravitational field. 
\end{remark}
We have the following crucial property of the structural matrices.
\begin{lemma}
    For the leg morphology $\alpha$ from $\mathcal{L}$ with $n_\alpha$ revolute degrees of freedom, the matrices $M_{\alpha j}$, $\N_\alpha$, and $\mathcal{J}_\alpha$ contain linear combination of trigonometric functions of the joint states with the maximum order of $2n_\alpha$, i.e., all functions in the form of the members of  
 \begin{gather}
            \Gamma_\alpha= \Bigg\{ \nu : \R^{n_\alpha} \to \R ~\Bigg|~ \nu(\theta_\alpha) = \prod_{j=1}^{n_\alpha} \sigma_j(\theta_{\alpha j}); ~ \sigma_j \in Z^{2}(\theta_{\alpha j}) \Bigg\} ,
            \label{Gamma}
        \end{gather} 
such that
\begin{gather}\nonumber
            Z^{2}(\lambda) = \Bigg\{ \delta : \R \to [-1, 1] ~\Bigg|~ \delta(\lambda) = \sin(\lambda)^{\psi} \cos(\lambda)^{\beta}, \\ 
            \psi, \beta \in \mathbb{Z}^{\geq 0}, ~  \psi + \beta  \leqslant 2  \Bigg\} . 
        \end{gather}
\end{lemma}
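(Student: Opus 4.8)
The plan is to track, for each joint variable, the trigonometric degree of every entry appearing in $M_{\alpha j}$, $\N_\alpha$ and $\mathcal J_\alpha$, and to show it never exceeds two. To this end I would introduce, for $d\ge 0$, the class $\mathcal R_d$ of finite $\R$-linear combinations of monomials $\prod_{k=1}^{n_\alpha}\sin^{\psi_k}(\theta_{\alpha k})\cos^{\beta_k}(\theta_{\alpha k})$ with $\psi_k+\beta_k\le d$ for every $k$, so that $\mathcal R_2=\mathrm{span}_\R\,\Gamma_\alpha$, and say a matrix is \emph{of class $\mathcal R_d$} when each entry lies in $\mathcal R_d$. Two elementary closure properties will be used throughout: (i) a product of a class-$\mathcal R_{d_1}$ matrix and a class-$\mathcal R_{d_2}$ matrix is of class $\mathcal R_{d_1+d_2}$, and if the two factors depend on disjoint sets of joint angles the product is of class $\mathcal R_{\max(d_1,d_2)}$; and (ii) $\partial/\partial\theta_{\alpha k}$ maps $\mathcal R_d$ into itself. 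The base case I would settle first: for a single revolute joint $k$ with unit axis $\omega_{\alpha k}$ through a point $q_{\alpha k}$, so $\xi_{\alpha k}=(-\omega_{\alpha k}\times q_{\alpha k},\,\omega_{\alpha k})$, factor the screw motion as $e^{\widehat\xi_{\alpha k}\theta_{\alpha k}}=T_{q_{\alpha k}}\,\mathrm{diag}(e^{\widehat\omega_{\alpha k}\theta_{\alpha k}},1)\,T_{q_{\alpha k}}^{-1}$; by Rodrigues' formula $e^{\widehat\omega_{\alpha k}\theta_{\alpha k}}=\1+\sin\theta_{\alpha k}\,\widehat\omega_{\alpha k}+(1-\cos\theta_{\alpha k})\widehat\omega_{\alpha k}^{2}$ is of class $\mathcal R_1$, hence so are $e^{\widehat\xi_{\alpha k}\theta_{\alpha k}}$, $\Ad_{e^{\widehat\xi_{\alpha k}\theta_{\alpha k}}}$ (whose off-diagonal block is $\widehat q_{\alpha k}e^{\widehat\omega_{\alpha k}\theta_{\alpha k}}-e^{\widehat\omega_{\alpha k}\theta_{\alpha k}}\widehat q_{\alpha k}$), $\Ad_{e^{-\widehat\xi_{\alpha k}\theta_{\alpha k}}}$, and all their transposes. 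Here the revolute hypothesis is essential: it removes the would-be $\theta_{\alpha k}$-affine term $\theta_{\alpha k}\,\omega_{\alpha k}\omega_{\alpha k}^{T}v_{\alpha k}$ from the translation and collapses the Adjoint's off-diagonal block to degree one.

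Next I would propagate this through the kinematics. By the product-of-exponentials formula used in Lemmas \ref{chap4:lemma1} and \ref{chap4:lemma2}, the quantities $g_{b,l_{\alpha j}}$, $g_{b,\alpha t}$ and the accumulated Adjoints ${}_\alpha\Add^{1}_{j}$, ${}_\alpha\Add^{k}_{j}$, $\Ad_{g_{b,\alpha t}}^{\pm1}$ of \eqref{Ad_new} are products of the constant matrices $\Ad_{g(0)}^{\pm1}$ with one class-$\mathcal R_1$ factor per joint, the factors having pairwise disjoint angle supports; by closure property (i) they are of class $\mathcal R_1$, and so is every instantaneous twist $\xi'_{\alpha k}$ and hence $J^{b}_{b,l_{\alpha j}}$ and $J_{\alpha t}$. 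The crucial step is to show that the columns of ${}_\alpha\Add^{1}_{j}\,J_{\alpha j}$ stay of class $\mathcal R_1$ rather than $\mathcal R_2$: because $\Ad$ is a group homomorphism the shared prefix cancels, yielding ${}_\alpha\Add^{1}_{j}\,\xi'_{\alpha k}={}_\alpha\Add^{k}_{j}\,\xi_{\alpha k}$ for $k\le j$, and since $\Ad_{e^{-\widehat\xi_{\alpha k}\theta_{\alpha k}}}\xi_{\alpha k}=\xi_{\alpha k}$, only the angles $\theta_{\alpha(k+1)},\dots,\theta_{\alpha j}$ actually survive, each to degree one; the same argument applies to the columns of $\Ad_{g_{b,\alpha t}}^{-1}J_{\alpha t}$.

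With these facts in hand I would assemble the structural matrices. Reading off \eqref{Mij_new2}, $M^{bb}_{\alpha j}=({}_\alpha\Add^{1}_{j})^{T}\cI_{\alpha j}\,{}_\alpha\Add^{1}_{j}$, $M^{b\theta}_{\alpha j}=({}_\alpha\Add^{1}_{j})^{T}\cI_{\alpha j}\,({}_\alpha\Add^{1}_{j}J_{\alpha j})$, and $M^{\theta\theta}_{\alpha j}=({}_\alpha\Add^{1}_{j}J_{\alpha j})^{T}\cI_{\alpha j}\,({}_\alpha\Add^{1}_{j}J_{\alpha j})$; each is a product of two class-$\mathcal R_1$ matrices around the constant inertia $\cI_{\alpha j}$, so by closure property (i) every block of $M_{\alpha j}$ is of class $\mathcal R_2=\mathrm{span}_\R\,\Gamma_\alpha$ (the zero padding of \eqref{chap4:Jij} only contributes $\mathcal R_0$). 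For $\mathcal J_\alpha$ — the rows of \eqref{jac} attached to Leg $\alpha$ — both $\Ad_{g_{b,\alpha t}}^{-1}$ and $\Ad_{g_{b,\alpha t}}^{-1}J_{\alpha t}$ are of class $\mathcal R_1\subseteq\mathcal R_2$. For $\N_\alpha$ — the gravitational part of \eqref{chap4:N} restricted to Leg $\alpha$ — a constant gravity vector $\gamma$ gives $U_\alpha=\sum_{j=1}^{n_\alpha}m_{\alpha j}\,\gamma^{T}(P_{s,b}+R_{s,b}\,P_{b,l_{\alpha j}}(\theta_\alpha))$ with $P_{b,l_{\alpha j}}$, the translation part of $g_{b,l_{\alpha j}}$, of class $\mathcal R_1$; since $\partial/\partial\theta_{\alpha k}$ preserves $\mathcal R_1$ and the substitution $\phi=\0$ in $S_b^{-T}\partial U_\alpha/\partial\phi$ does not touch the $\theta$-dependence, $\N_\alpha$ is of class $\mathcal R_1\subseteq\mathcal R_2$. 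Thus every entry of $M_{\alpha j}$, $\N_\alpha$ and $\mathcal J_\alpha$ lies in $\mathrm{span}_\R\,\Gamma_\alpha$, as claimed.

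The main obstacle is the cancellation in the propagation step. If one simply multiplies out the mixed inertia blocks $M^{b\theta}_{\alpha j}$ and $M^{\theta\theta}_{\alpha j}$, they appear as products of three class-$\mathcal R_1$ factors — an Adjoint transpose, an Adjoint, and a Jacobian column — which only gives class $\mathcal R_3$; recovering the sharp order $2n_\alpha$ forces the re-parenthesization above and the use of the homomorphism property of $\Ad$ together with the fixed-twist identity $\Ad_{e^{-\widehat\xi_{\alpha k}\theta_{\alpha k}}}\xi_{\alpha k}=\xi_{\alpha k}$ to collapse ${}_\alpha\Add^{1}_{j}J_{\alpha j}$ to class $\mathcal R_1$. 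A secondary point of care is the base case, where the specific revolute form of $\xi_{\alpha k}$ must be invoked so that no non-trigonometric term enters $e^{\widehat\xi_{\alpha k}\theta_{\alpha k}}$.
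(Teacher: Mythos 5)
Your proof is correct, but it takes a genuinely different route from the paper: the paper's entire proof is a one-line reduction ("Leg $i$ can be considered as a serial-link manipulator") followed by a citation to Prop.~3 of \cite{lloyd2021numeric}, whereas you give a self-contained degree-counting argument. The two key observations you supply — that for a revolute screw the off-diagonal block of $\Ad_{e^{\widehat\xi_{\alpha k}\theta_{\alpha k}}}$ collapses to $\widehat q_{\alpha k}e^{\widehat\omega_{\alpha k}\theta_{\alpha k}}-e^{\widehat\omega_{\alpha k}\theta_{\alpha k}}\widehat q_{\alpha k}$ and hence stays at per-joint degree one, and that the homomorphism property of $\Ad$ together with $\Ad_{e^{-\widehat\xi_{\alpha k}\theta_{\alpha k}}}\xi_{\alpha k}=\xi_{\alpha k}$ cancels the shared prefix in $_\alpha\Add^1_j J_{\alpha j}$ so the mixed and joint-space inertia blocks land in $\mathcal R_2$ rather than $\mathcal R_3$ — are exactly the content hidden inside the cited proposition, and both are verified correctly. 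What your version buys is coverage the citation does not obviously provide: the external result concerns the joint-space mass matrix of a fixed-base serial manipulator, while the lemma here also asserts the bound for the floating-base block $M^{bb}_{\alpha j}$, the coupling block, the tip Jacobian $\mathcal J_\alpha$, and the gravity vector $\N_\alpha$; your argument handles all of these uniformly through the $\mathcal R_d$ calculus. What the paper's route buys is brevity and delegation of the bookkeeping. The only caveat is cosmetic: your conclusion is that every entry lies in $\mathrm{span}_\R\,\Gamma_\alpha$, which is the intended reading of the lemma's (somewhat loosely phrased) "linear combination of trigonometric functions ... of maximum order $2n_\alpha$."
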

\begin{proof}
    In our modular construction, Leg $i$ can be considered as a serial link manipulator, and the proof follows directly from \cite[Prop. 3]{lloyd2021numeric}.
\end{proof}

Based on this lemma, we can immediately decompose every $M_{\alpha j}$ into a constant matrix and a matrix of all allowable trigonometric functions in $\Gamma_\alpha$. More specifically, we have 
\begin{align}
 M_{\alpha j}(\theta_\alpha) =:\begin{bmatrix}
           \mathcal{I}^{bb}_{\alpha j} \mathcal{T}^{bb}_{\alpha j} & \0 &  \mathcal{I}^{b\theta}_{\alpha j} \mathcal{T}^{b\theta}_{\alpha j} & \0 \\
            \0 & \0 & \0 & \0 \\
            (\mathcal{I}^{b\theta}_{\alpha j} \mathcal{T}^{b\theta}_{\alpha j})^T & \0 &  \mathcal{I}^{\theta\theta}_{\alpha j} \mathcal{T}^{\theta\theta}_{\alpha j} & \0 \\
            \0 & \0 & \0 & \0 
        \end{bmatrix} \label{eq:fast_link m}
\end{align}
where we have $\mathcal{T}^{bb}_{\alpha j}=\diag_6\left(\F_\alpha \right) \in \R^{6 \gamma_\alpha \times 6} $ and  $\mathcal{T}^{b \theta}_{\alpha j}=\mathcal{T}^{\theta \theta}_{\alpha j}=\diag_{n_\alpha}\left(\F_\alpha \right) \in \R^{n_\alpha \gamma_\alpha \times n_\alpha } $, and
\begin{align} \label{f_mod1}
\F_{\alpha} &:= \begin{bmatrix}
        1 \\  1^{st}~\text{order functions} \\ 2^{nd}~\text{order functions} \\ \vdots \\ 2n_\alpha^{th}~\text{order functions}
    \end{bmatrix}\in\R^{\gamma_\alpha},
\end{align}
which contains all trigonometric functions present in $M_{leg_\alpha}(\theta_\alpha)$ starting by $1^{st}$ order terms to $2n_\alpha^{th}$ order terms. Note that the function $\diag$ generates a diagonal matrix from copies of the entry. For a given leg morphology $\alpha$, $\gamma_\alpha$ is dependent on the DoF of the leg. For instance, $\gamma_\alpha$ is equal to 6, 30, or 140 for one, two, or three DoF legs, respectively. Assuming that every element of $\F_\alpha$ is a linearly independent variable, we symbolically obtain the coefficient matrices 
$\mathcal{I}^{\theta \theta}_{\alpha j} \in \R^{n_\alpha \times n_\alpha \gamma_\alpha} $, $\mathcal{I}^{b \theta}_{\alpha j} \in \R^{6 \times \gamma_\alpha  n_\alpha} $, and $\mathcal{I}^{bb}_{\alpha j} \in \R^{6 \times 6\gamma_\alpha } $ by taking the partial derivative of columns of $M^{bb}_{\alpha j}$ with respect to $\F_{\alpha}$ as:
\begin{align} \label{i_mod1}
    \mathcal{I}^{bb}_{\alpha j} &=
    \begin{bmatrix}
        \frac {\partial \big[M^{bb}_{\alpha j}\big]_1 }{\partial \F_{\alpha} } & \cdots & \frac {\partial \big[M^{bb}_{\alpha j}\big]_6 }{\partial \F_{\alpha} }
    \end{bmatrix}\in \R^{6 \times 6\gamma_i} \\ \label{i_mod2}
    \mathcal{I}^{b \theta}_{\alpha j} &=
    \begin{bmatrix}
        \frac {\partial \big[\M^{b \theta}_{\alpha j}\big]_1 }{\partial \F_{\alpha} } & \cdots & \frac {\partial \big[\M^{b \theta}_{\alpha j}\big]_{n_\alpha} }{\partial \F_{\alpha} }
    \end{bmatrix}\in \R^{6 \times \gamma_\alpha  n_\alpha} \\ \label{i_mod3}
    \mathcal{I}^{\theta \theta}_{\alpha j} &=
    \begin{bmatrix}
        \frac {\partial \big[\M^{\theta \theta}_{\alpha j}\big]_1 }{\partial \F_{\alpha} } & \cdots & \frac {\partial \big[\M^{\theta \theta}_{\alpha j}\big]_{n_\alpha} }{\partial \F_{\alpha} }
    \end{bmatrix}  \in \R^{n_\alpha \times n_\alpha \gamma_\alpha}
\end{align}
where $ \big[\cdot]_k$ denotes the $k^{th}$ column of the entry matrix. 
We adopt a same procedure to calculate the decomposed version of the vector $\N$. As $\N_b$ in \eqref{chap4:N} is not a function of $\theta_\alpha$, we only need to calculate the decomposed $\N_\alpha$ for an arbitrary Leg $\alpha$: 
\begin{align} \label{eq:fast_N}
     \N_\alpha &= \mathcal{I}^{N}_{\alpha}  \F_{\alpha},  
\end{align}
where $\mathcal{I}^{N}_{\alpha} := \frac {\partial \N_\alpha }{\partial \F_{\alpha} }  \in \R^{n_\alpha \times \gamma_\alpha}$.
Finally, we need to calculate the decomposed version of $\mathcal{J}_\alpha$:
\begin{align} \label{eq:fast_J}
     \mathcal{J}_\alpha^T &= \mathcal{I}^{\mathcal{J}}_{\alpha} \mathcal{T}^{\mathcal{J}}_{\alpha}, 
\end{align}
where $\mathcal{I}^{\mathcal{J}}_{\alpha} := \begin{bmatrix}\frac {\partial \big[\mathcal{J}_\alpha^T\big]_1 }{\partial \F_{\alpha} } & \cdots & \frac {\partial \big[\mathcal{J}_\alpha^T\big]_6 }{\partial \F_{\alpha} } \end{bmatrix} \in \R^{6+N_T \times 6\gamma_\alpha}$.

The modular Coriolis matrix of an MLR was introduced in \eqref{c_m}. Here, we introduce a fast computational method (comparing to the closed form equations in Lemma \ref{lem:3}), using the mass decomposition in \eqref{eq:fast_link m}. 

\begin{remark}
    Leg $\alpha$'s vector of trigonometric functions $\F_\alpha$ contains only terms involving $\sin$ and $\cos$  functions with a maximum order of $2n_\alpha$. Therefore, the vectors of partial derivatives of $\F_\alpha$ with respect to $\theta_{\alpha k}$ can be symbolically calculated and implemented before the simulation loop.
\end{remark}


    Given the vectors of partial derivatives $\F_{\alpha k}:=\frac{\partial \F_\alpha}{\partial \theta_{\alpha k}}$ and According to Proposition \ref{proposition2}, the partial derivatives of the mass matrix in Lemma \ref{lem:3}, and hence, the Coriolis matrix can be calculated in closed form by the following computations:
    \begin{align}  \label{eq:pM1}
        \frac {\partial  M^{bb}_{\alpha j}}{\partial \theta_{\alpha k} } &= \mathcal{I}^{bb}_{\alpha j}\frac {\partial  \mathcal{T}^{bb}_{\alpha j}}{\partial \theta_{\alpha k} } = \mathcal{I}^{bb}_{\alpha j}\diag_6\left(\F_{\alpha k}\right), \\ \label{eq:pM2}
        \frac {\partial  \M_{\alpha j}^{b\theta} }{\partial \theta_{\alpha k} } &= (\frac {\partial  \M_{\alpha j}^{\theta b} }{\partial \theta_{\alpha k} })^T = \mathcal{I}^{b\theta}_{\alpha j}\frac {\partial  \mathcal{T}^{b\theta}_{\alpha j} }{\partial \theta_{\alpha k} }= \mathcal{I}^{b\theta}_{\alpha j}\diag_{n_\alpha}\left(\F_{\alpha k}\right), \\ 
        \frac {\partial  \M_{\alpha j}^{\theta \theta}}{\partial \theta_{\alpha k} } &= \mathcal{I}^{\theta\theta}_{\alpha j}\frac {\partial \mathcal{T}^{\theta\theta}_{\alpha j}}{\partial \theta_{\alpha k} }= \mathcal{I}^{\theta\theta}_{\alpha j}\diag_{n_\alpha}\left(\F_{\alpha k}\right). \label{eq:pM3}
    \end{align}
The symbolic decomposition process of the structural matrices for incorporating a generic leg morphology is outlined in Algorithm~\ref{algo:fast1}. This procedure can be iteratively applied to construct a repository of structural matrices corresponding to each unique leg morphology within the set~$\mathcal{L}$.

\begin{algorithm} 
\caption{Symbolic Decomposition of Structural Matrices}
\begin{algorithmic}[1] \label{algo:fast1}
\REQUIRE Symbolic matrices $\mathcal{J_\alpha},\N_\alpha$ for Leg $\alpha$ from the set $\mathcal{L}$ and $M_{\alpha  j}$ for all the links of that leg based on \eqref{chap4:M_link}, \eqref{chap4:N}, and \eqref{jac}. 
    \STATE Collect the symbolic vector $\F_\alpha$ in the form of \eqref{f_mod1}
    \STATE Symbolically derive matrices $\mathcal{I}^{N}_{\alpha}, \mathcal{I}^{\mathcal{J}}_{\alpha}, \mathcal{T}^{\mathcal{J}}_{\alpha}$
    \FOR{$j = 1$ to $n_\alpha$}
        \STATE  Symbolically derive matrices $\mathcal{I}^{bb}_{\alpha j}, \mathcal{T}^{bb}_{\alpha j}, \mathcal{I}^{b\theta}_{\alpha j} \mathcal{T}^{b\theta}_{\alpha j}, \mathcal{I}^{\theta\theta}_{\alpha j}$, $\mathcal{T}^{\theta\theta}_{\alpha j}$ based on \eqref{i_mod1}-\eqref{i_mod3}
        \STATE  Symbolically form $\F_{\alpha j}$                   
    \ENDFOR
\RETURN Symbolic matrices $\mathcal{I}^{bb}_{\alpha j}, \mathcal{T}^{bb}_{\alpha j}, \mathcal{I}^{b\theta}_{\alpha j} \mathcal{T}^{b\theta}_{\alpha j}, \mathcal{I}^{\theta\theta}_{\alpha j}$, $\mathcal{T}^{\theta\theta}_{\alpha j}$, $\mathcal{I}^{N}_{\alpha}, \mathcal{I}^{\mathcal{J}}_{\alpha}, \mathcal{T}^{\mathcal{J}}_{\alpha}$,$\F_\alpha$, and $\F_{\alpha j}$ 
\end{algorithmic}
\end{algorithm}
\noindent The decomposition process enables the development of a fast simulation engine capable of incorporating morphology changes during simulation. 
Inside a simulation loop, for each Leg $i\in\{1,\ldots,N\}$, the link mass matrices $M_{ij}$, their partial derivatives with respect to joint states $\frac{\partial M_{ij}}{\partial \theta_{ik}}$, the vector $\N_i$, and the Jacobian $\mathcal{J}_i$ are then computed numerically based on the specified leg morphology. This is done by substituting the leg-specific geometric parameters and joint states into the precomputed symbolic expressions for matices of Algorithm \ref{algo:fast1}. The remaining steps follow the same procedure described in Algorithm~\ref{algo:modul}. A key advantage of this method is the efficient computation of $\frac{\partial M_{ij}}{\partial \theta_{ik}}$. Rather than repeatedly performing matrix multiplications in the closed-form expressions derived in Lemma~\ref{lem:3}, the symbolic derivatives of $\F_i$ are computed once outside the simulation loop, and numerical values are substituted at runtime. This significantly reduces computational load. The complete procedure of morphology change using the presented fast modeling strategy is summarized in Algorithm~\ref{algo:fast2}.
\begin{algorithm} 
\caption{Fast model reconfiguration for morphology changes}
\begin{algorithmic}[1] \label{algo:fast2}
    \REQUIRE The morphology of the healthy MLR, i.e., $N,n_i,\xi_{ij}$
\REQUIRE Morphology vectors $\bar\EX$ and $\EX$ 
\REQUIRE Symbolic matrices $\mathcal{I}^{bb}_{\alpha j}, \mathcal{T}^{bb}_{\alpha j}, \mathcal{I}^{b\theta}_{\alpha j} \mathcal{T}^{b\theta}_{\alpha j}, \mathcal{I}^{\theta\theta}_{\alpha j}$, $\mathcal{T}^{\theta\theta}_{\alpha j}$, $\mathcal{I}^{N}_{\alpha}, \mathcal{I}^{\mathcal{J}}_{\alpha}, \mathcal{T}^{\mathcal{J}}_{\alpha}$,$\F_\alpha$, and $\frac{\partial \F_\alpha}{\partial \theta_{\alpha j}}$ for $\alpha=1,\cdots, \bar N$, and $j=1,\cdots,n_\alpha$
\STATE Set $M,C,\mathcal{A}, \mathcal{B}, \mathcal{D}, \mathcal{E}, \mathcal{P}$ to zero with appropriate dimensions from the healthy morphology
\FOR{$i = 1$ to $N$} 
    \STATE $M_{leg_i}=\0$ 
    \IF{$\bar\ex_{i} = 1$} 
        \IF{$\EX_{leg_i}$ is a valid morphology}
            \FOR{$j = 1$ to $n_i$}
                \IF{$\ex_{ij}=1$}
                    \STATE Compute blocks of $M_{ij}$ based on \eqref{eq:fast_link m}
                    \STATE $M_{leg_i}\leftarrow M_{leg_i} + M_{ij}$ based on \eqref{m_leg_NEW}
                    \STATE Update $\mathcal{P}, \mathcal{B}$ based on \eqref{eq:P}, \eqref{eq:cB}
                    \FOR{$k=1$ to $j$}
                        \STATE Compute $\frac{\partial M_{ij}}{\partial \theta_{ik}}$ based on \eqref{eq:pM1} to \eqref{eq:pM3}
                        \STATE $\mathcal{A}\leftarrow \mathcal{A} + \frac{\partial M_{ij}}{\partial \theta_{ik}}$ based on \eqref{eq:cA}
                        \STATE Update $\mathcal{D}, \mathcal{E}$ based on \eqref{eq:cD}, \eqref{eq:cE}
                    \ENDFOR     
                    \STATE Update Coriolis matrix $C$ \eqref{c_m}
                \ENDIF
            \ENDFOR  
            \STATE Compute $\N_{i}$ based on \eqref{eq:fast_N}
            \STATE Compute $\mathcal{J}^T_{i}$ based on \eqref{eq:fast_J}
        \ELSE
           \STATE Error!
        \ENDIF
    \ENDIF
    \STATE Assemble $M$ based on \eqref{chap4:m_robot_NEW}
    \STATE Assemble $\mathcal{J}$ based on \eqref{jac}
    \STATE Assemble $\N$ based on \eqref{chap4:N}
\ENDFOR
\STATE Reshape $M$ and $C$ by removing rows/columns corresponding to zero existence numbers
\STATE Remove rows of $\mathcal{J}$ corresponding to removed tip links
\STATE Reshape vectors $\N,v,\dot v,\tau,F_t$
\RETURN Reduced order rederived $M,C,\mathcal{J},\N,v,\dot v,\tau, F_t$
\end{algorithmic}
\end{algorithm}

\section{Simulation \& Experiment Results}\label{chap4:results}
We evaluate the performance of the proposed modeling method for simulating MLRs using the fast, modular dynamics introduced in Sections \ref{chap:Modular} and \ref{chap:fast}, along with a local joint space controller, contact model, and gait generation algorithm. As a representative case study, we focus on a six-legged robot modeled as a multibody system with six 3-DoF legs. The effectiveness of our model is assessed through multiple experiments, incorporating internal and external sensory data from a real six-legged robot and motion-tracking cameras, as detailed in the following sections.

\subsection{Platform Description}

The experimental setup incorporates a Hiwonder JetHexa, a six-legged robot with 3-Dof legs. Each leg has three serial bus servo motor-actuated joints. The onboard robot computer is an NVIDIA Jetson Nano Developer Kit with an MPU6050 IMU \cite{jethexa}. Figure \ref{Robot} shows a picture of the system. 
\begin{figure}[hbt!]
  \centering
  \includegraphics[width=.3\textwidth]{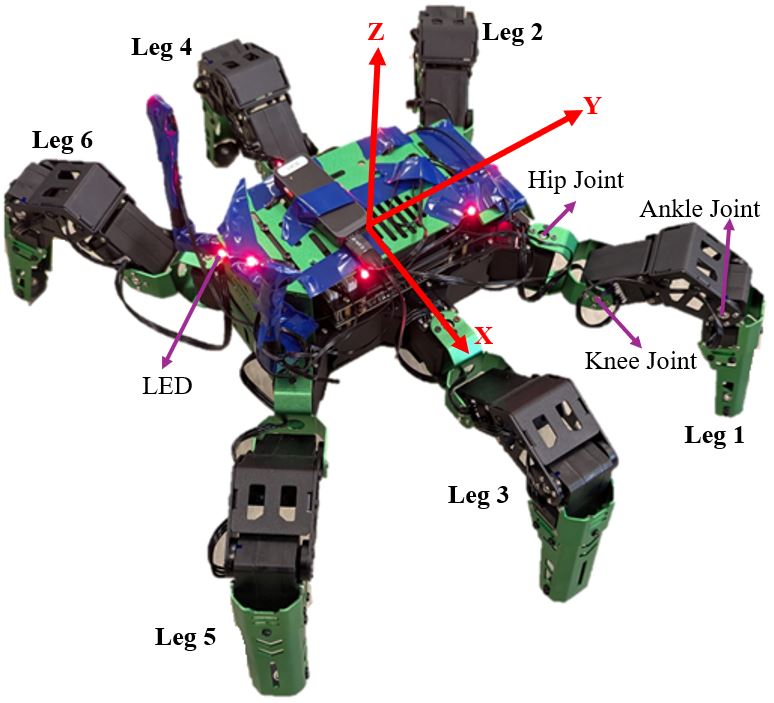}\\
  \caption{Hiwonder JetHexa robot with motion-tracking LEDs}
  \label{Robot}
\end{figure}
Tables \ref{trunk parameters} and \ref{leg parameters} show the inertial and geometric parameters of the simulated model based on the Hiwonder JetHexa robot. 
\begin{table}[ht]
\centering 
\caption {Physical parameters for the main body of the robot}
\label{trunk parameters} 
\scalebox{1} {
\begin{tabular}{c c c}
  \hline \hline
  Main body Parameters & Symbol & Value \\
   \hline
  Mass ($kg$) & $m_{b}$ & 1.35\\
  Main body dimensions ($10^{-3} ~m$) & $X_b$ & 84 \\
  & $Y_b$ & 198 \\
  & $Z_b$ & 36 \\
  Location of Leg 1 \& 2 hip joints ($10^{-3} ~m$) & $x_{h_{1/2}}$ & +/-51 \\
  & $y_{h_{1/2}}$ & 93 \\
  & $z_{h_{1/2}}$ & 0 \\
  Location of Leg 3 \& 4 hip joints ($10^{-3} ~m$) & $x_{h_{3/4}}$ & +/-73 \\
  & $y_{h_{3/4}}$ & 0 \\
  & $z_{h_{3/4}}$ & 0 \\
  Location of Leg 5 \& 6 hip joints ($10^{-3} ~m$) & $x_{h_{5/6}}$ & +/-51 \\
  & $y_{h_{5/6}}$ & -93 \\
  & $z_{h_{5/6}}$ & 0 \\
  Moment of inertia ($10^{-4} ~ kg.m^2 $) & $I_{xb}$ & 46\\
  & $I_{yb}$ & 9.36\\
  & $I_{zb}$ & 52\\
      \hline \hline
\end{tabular}  }
\end{table}
\begin{table}[ht]
\centering 
\caption {Physical parameters for each leg of the robot}
\label{leg parameters} 
\scalebox{1} {
\begin{tabular}{c c c c c}
  \hline \hline
  Link Parameters & Symbol & Link 1 & Link 2 & Link 3 \\
   \hline
  Mass ($kg$) & $m_{ij}$ & 0.02 & 0.07 & 0.11\\
  Length ($10^{-3} ~m$) & $L_{ij}$ & 45 & 77 & 123\\
  Moment of inertia  & $^{xx} I_{ij}$ & 1 & 0.23 & 0.22\\
  ($10^{-4} ~ kg.m^2 $) & $^{yy} I_{ij}$ & 8.28 & 3.07 & 10\\
  & $^{zz} I_{ij}$ & 9.09 & 2.91 & 10.01\\
      \hline \hline
\end{tabular}  }
\end{table}

This paper employs a general contact point model that considers tangential and normal force components at the contact interface of a leg tip with the ground. The normal contact forces are captured using a set of linear spring-damper equations once a contact is detected. For the tangential direction, viscous friction is considered to represent the forces parallel to the ground. Table \ref{contact parameters} lists the experimentally identified parameters to model contact between the tip of the legs and the ground. 
\begin{table}[ht]
\centering 
\caption {Contact point dynamics parameters}
\label{contact parameters} 
\scalebox{1} {
\begin{tabular}{c c c}
  \hline \hline
  Contact Parameters & Symbol & Value \\
   \hline
  Ground stiffness ($N/m$) & $k_{z}$ & 10000\\
  Ground viscosity ($N/(m/s)$) & $d_{z}$ & 150 \\
  Viscous friction coefficient ($N/(m/s)$) & $d_{t}$ & 50\\
      \hline \hline
\end{tabular}  }
\end{table}
In all experiments, we aim to walk with the Tripod gait in a straight line with a constant velocity of $0.1 ~m/s$. In this gait, there are always three feet in support phase and the other three feet in the swing state. Legs $\lbrace 1,4,5\rbrace$ are in the same phase (Group 1), and Legs $\lbrace2,3,6\rbrace$ are in the same phase (Group 2). The phase difference $\varphi$ between the two groups is $\pi$. The trajectory of the tip of the leg is designed relative to the main body for a straight motion in the y direction, as in Figure \ref{Robot}, while keeping the main body parallel to the ground. Therefore, the body is considered fixed, and the desired position of Leg $i$'s tip in the main body frame $\Bar{p}^b_{b,it}$ is defined by:
\begin{equation}
    \begin{cases}
      \Bar{p}^b_{b,it_x} = x_{h_{it}} = constant\\
      \Bar{p}^b_{b,it_y} = y_{h_{it}} -0.5L_{sl}\cos(k\frac{\pi}{0.5N_g} + \Phi_i(1-\frac{2}{N_g}) ) \\
       ^{swing}\Bar{p}^b_{b,it_z} = z_{h_{it}} + 0.5L_{sh}(1-\cos(k\frac{2\pi}{0.5N_g})) - H_0  \\
      ^{support}\Bar{p}^b_{b,it_z} = z_{h_{it}} + 0.5L_{sd}(1-\cos(k\frac{2\pi}{0.5N_g})) - H_0 
    \end{cases}
    \label{tip_ref}
\end{equation}
where $L_{sl}$, $L_{sh}$, and $L_{sd}$ denote the step length, height, and depth, respectively, $k =  1, ~ \dotsb ~, N_g$, and $H_0$ is the main body's initial height. $x_{h_{it}}$, $y_{h_{it}}$, and $z_{h_{it}}$ are the position of the first (hip) joint of Leg $i$ in the main body frame. $N_g$ is the number of points on the gait and will be calculated from the time step $\Delta t$ and the cycle time $T_g=N_g\Delta t$. Table \ref{gait parameters} shows the parameters of the tripod gait used in computer simulations.
\begin{table}[ht]
\centering 
\caption {Tripod gait parameters}
\label{gait parameters} 
\scalebox{1} {
\begin{tabular}{c c c}
  \hline \hline
  Trunk Parameters & Symbol & value \\
   \hline
  Step length ($10^{-2} ~m$) & $L_{sl}$ & 5 \\
  Step height ($10^{-2} ~m$) & $L_{sh}$ & 12 \\
  Step depth ($10^{-2} ~m$) & $L_{sd}$ & 0.1 \\
  Cycle time ($sec$) & $T_g$ & 1.3 \\  
  Initial height ($10^{-2} ~m$) & $H_{0}$ & 12 \\
      \hline \hline
\end{tabular}  }
\end{table}
The controller aims to control each leg relative to the body in joint space. Hence, a trajectory for the tip is generated from \eqref{tip_ref}, and desired joint trajectories can be found with the close-form of inverse kinematic equations for a 3-Dof manipulator. A joint space PID controller is designed to follow these trajectories with proportional, derivative, and integral gains $K_1 = 50$, $K_2 = 1$, and $K_3 = 10$, respectively. 


\subsection{Experiment Results}

In our experiments, we simulate the locomotion of a six-legged robot under two distinct damage scenarios using the dynamical equations described earlier. The first scenario involves the complete removal of Legs 3 and 4, representing damage to one leg in each leg group. The second scenario simulates damage to Legs 4 and 5, specifically targeting the loss of their last two links ($l_{42}, ~l_{43}, ~l_{52}, ~l_{53}$), which corresponds to two damaged legs within a single leg group. Although the robot maintains stability in both scenarios, its motion deviates significantly from the intended design. Using the same gait trajectory, controller, and initial configuration as in the previous section, we expect to observe noticeable tilting in the robot's movement. To have a reasonable validation of our model with sensory data, we compared the model's output with IMU and camera data for the main body orientation. However, for the robot's translational motion, we relied solely on camera data, as double integration from IMU acceleration data introduces significant errors and inaccuracies.
Figures \ref{damage1_o}, and \ref{damage1_p} shows how the main body orientation and CoM position change with respect to the spatial coordinate frame during the first damage scenario. Simultaneously staying in the y direction, the robot oscillates about the initial position in the X and Z directions. This can be seen from both CoM motion and the Euler angle about the Y (pitch) and Z (yaw) axes. The ZYX Euler angles are used to demonstrate main body orientation. The model's output is visualized using Matlab's animation toolbox, as shown in Figure \ref{damage1-snap}. Due to the damage, the robot's legs remain in contact with the ground and cannot lift off, resulting in a lack of forward motion. Consequently, we observe only the effect of the swing phase on the main body's motion.
\begin{figure}[hbt!]
  \centering
  \includegraphics[width=.45\textwidth]{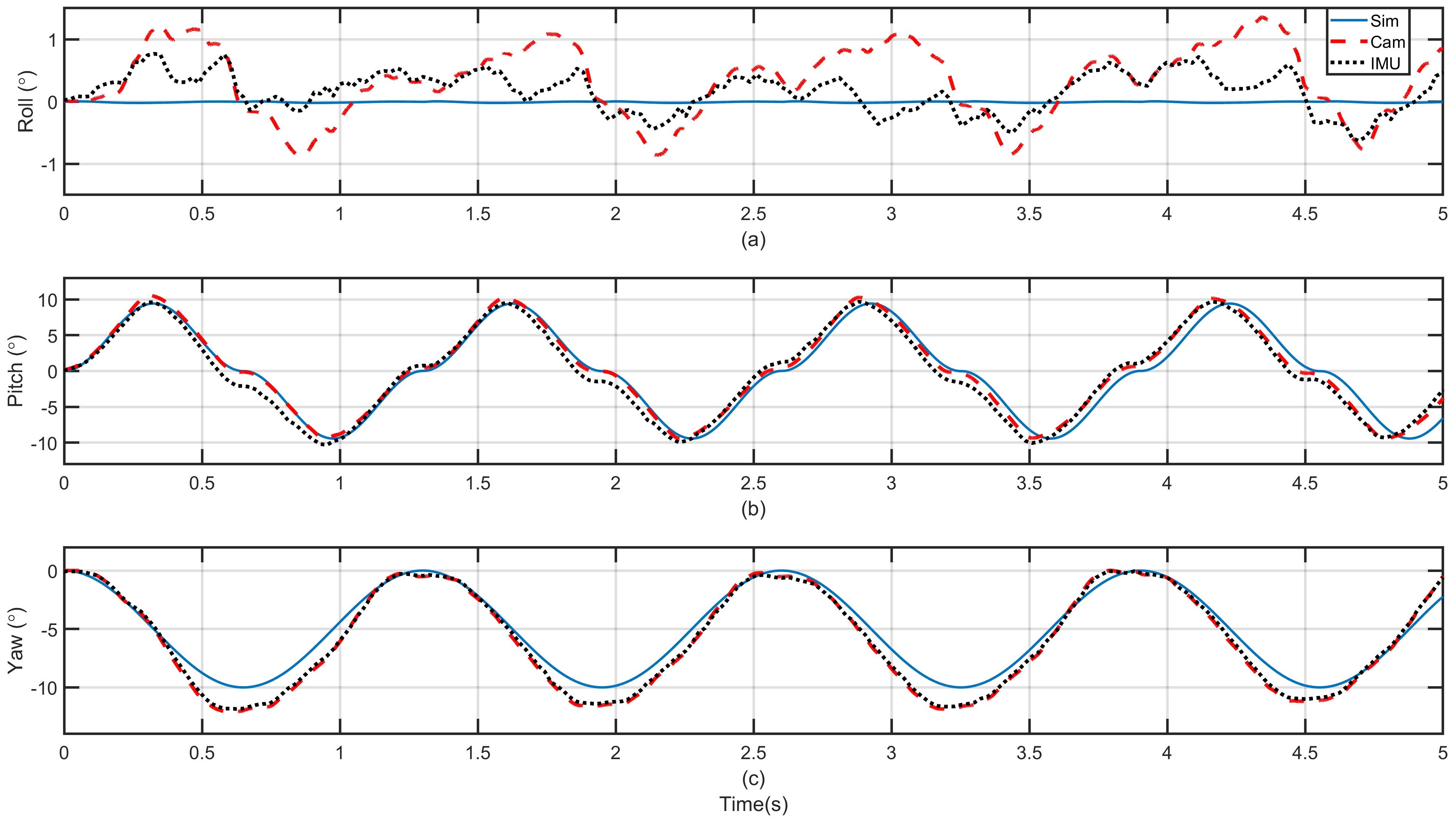}\\
  \caption{Main body orientation during the first damage scenario}
  \label{damage1_o}
\end{figure}
\begin{figure}[hbt!]
  \centering
  \includegraphics[width=.45\textwidth]{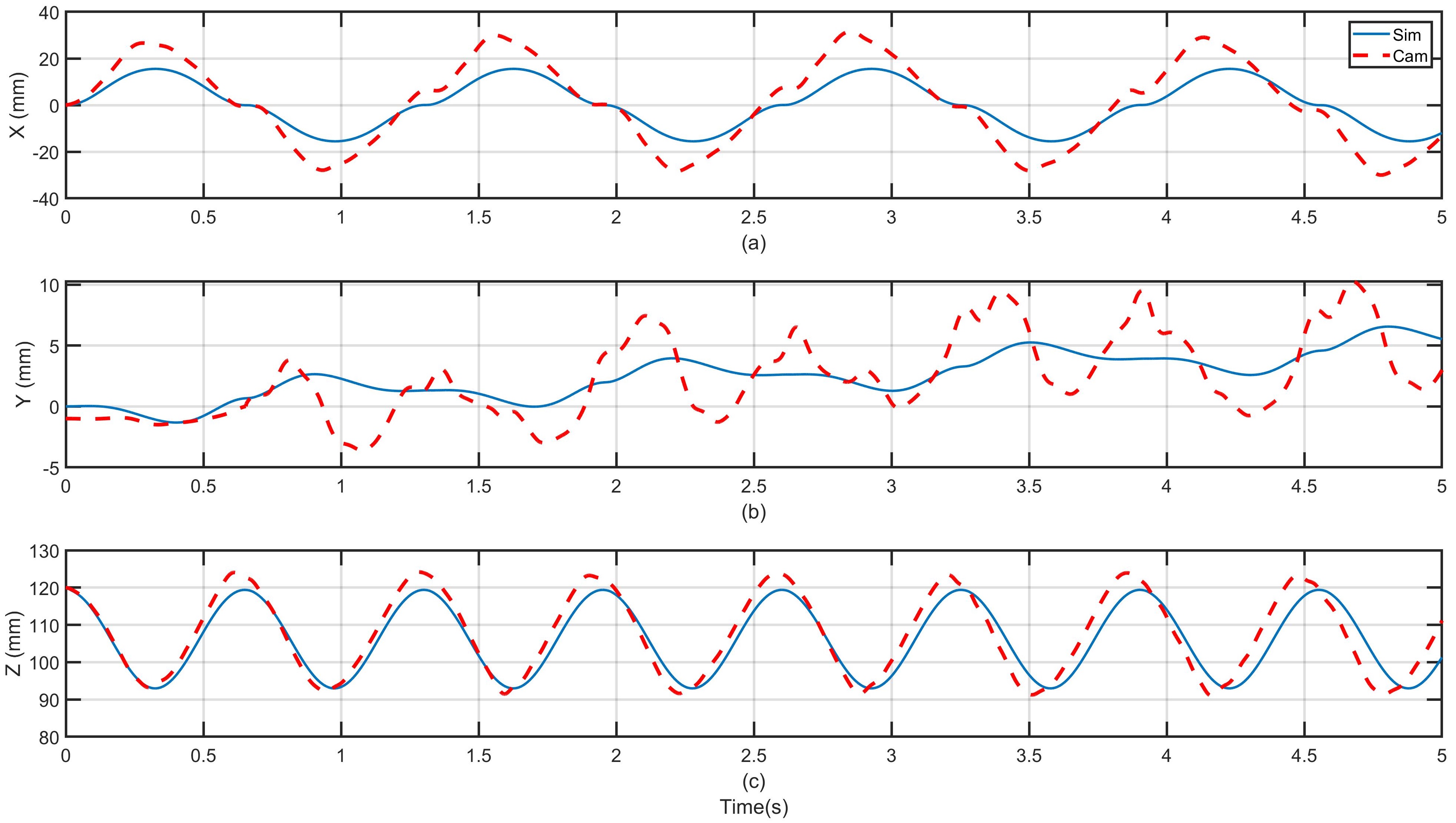}\\
  \caption{Main body COM position during the first damage scenario}
  \label{damage1_p}
\end{figure}
\begin{figure*}[htbp]
    \centering
    \includegraphics[width=.95\textwidth]{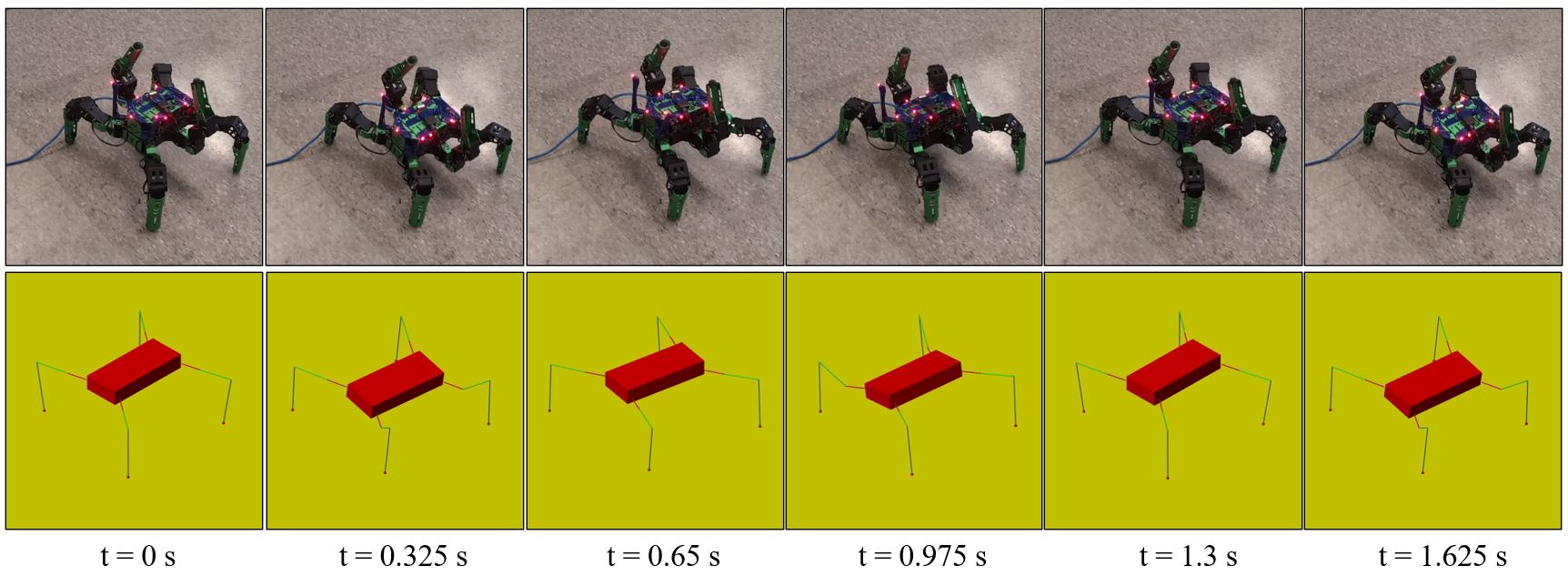}
    \caption{Snapshots of 1.625 seconds of model and robot motions for the first damage scenario}
    \label{damage1-snap}
\end{figure*}

Figures \ref{damage2_o}, and \ref{damage2_p} illustrate that despite the first damage scenario, in the second damage scenario, the robot exhibits forward motion in the Y direction and lateral motion in the X direction. With two legs in the first leg group ($\lbrace 1,4,5 \rbrace$) damaged, this group cannot maintain the robot's stability. As a result, there is a drop in the Z direction when this group is expected to stabilize the robot, disrupting the swing phase of the second group of legs ($\lbrace 2,3,6 \rbrace$). Furthermore, Leg 1, the only remaining healthy leg in the first group, undergoes a rotational motion about the Z axis during the support phase, resulting in motion in the X and Y directions. Figure \ref{damage2-snap} shows the snapshots of the model and robot motions for 1.625 seconds. Using a computer equipped with an Intel(R) Core(TM) i7-6700HQ CPU, our simulation engine completed a 5-second motion simulation in 1.93 seconds, which shows its real-time performance. 
\begin{figure}[hbt!]
  \centering
  \includegraphics[width=.45\textwidth]{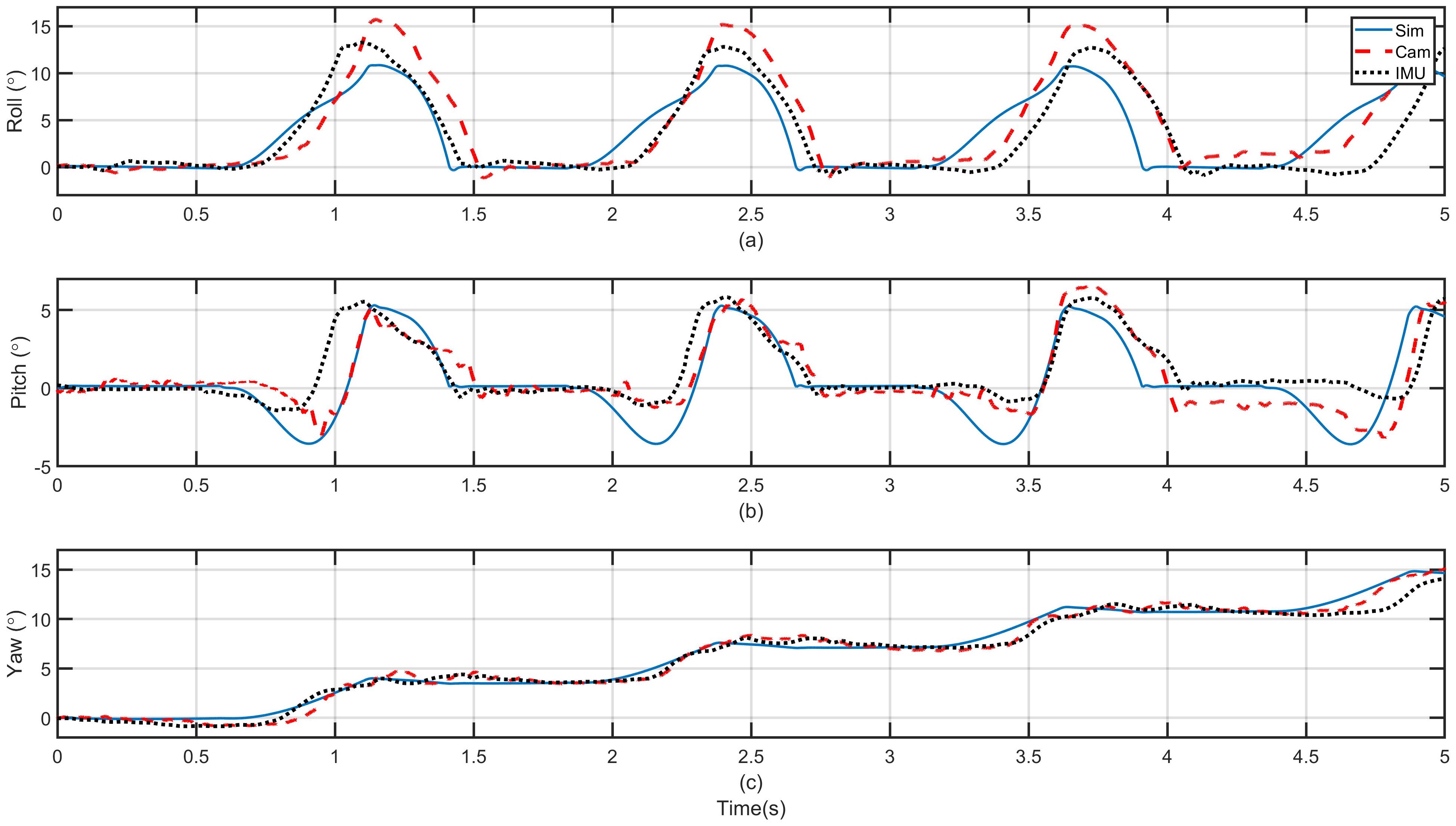}\\
  \caption{Main body orientation during the second damage scenario}
  \label{damage2_o}
\end{figure}
\begin{figure}[hbt!]
  \centering
  \includegraphics[width=.45\textwidth]{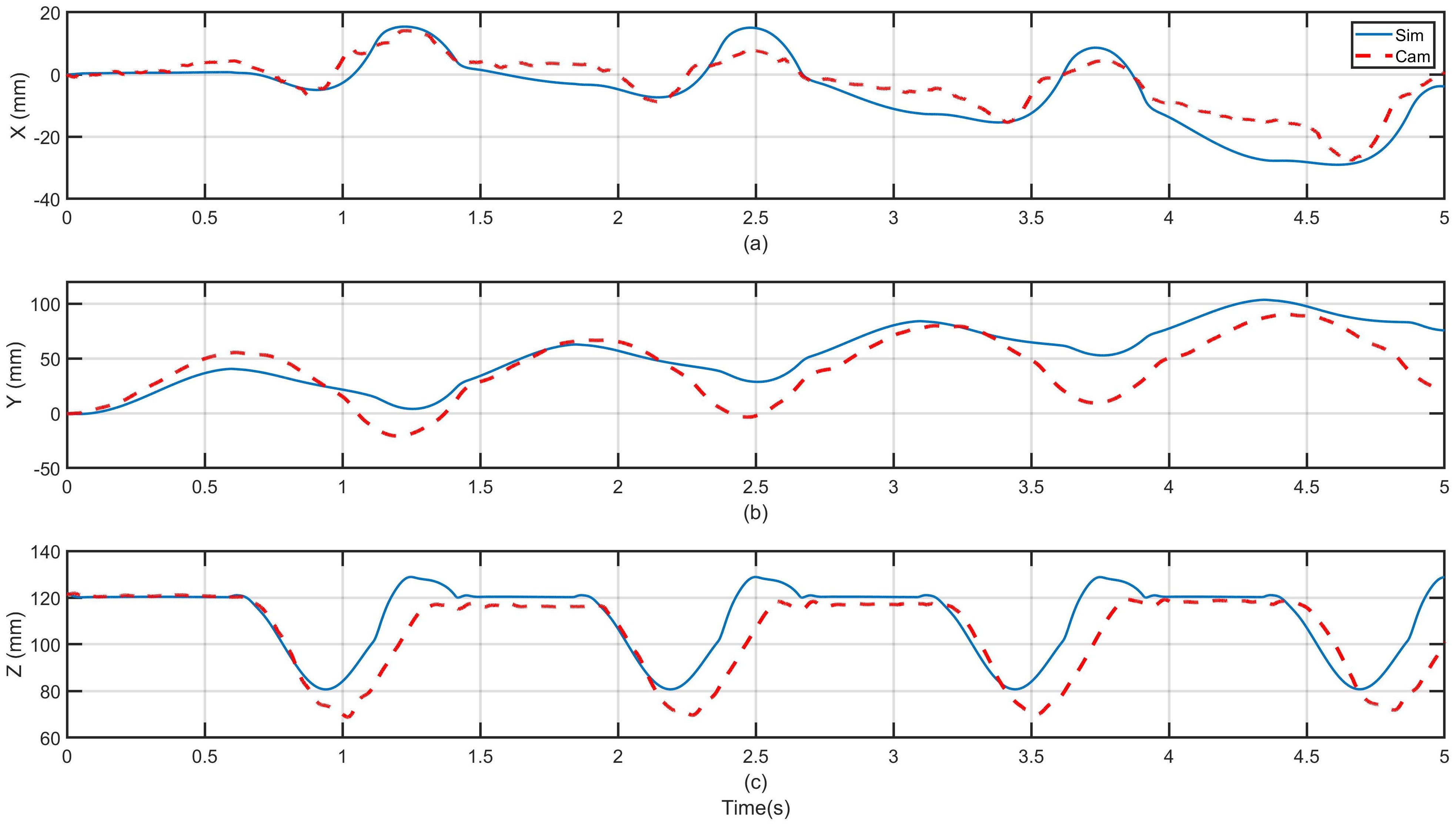}\\
  \caption{Main body COM position during the second damage scenario}
  \label{damage2_p}
\end{figure}
\begin{figure*}[htbp]
  \centering
  \includegraphics[width=.95\textwidth]{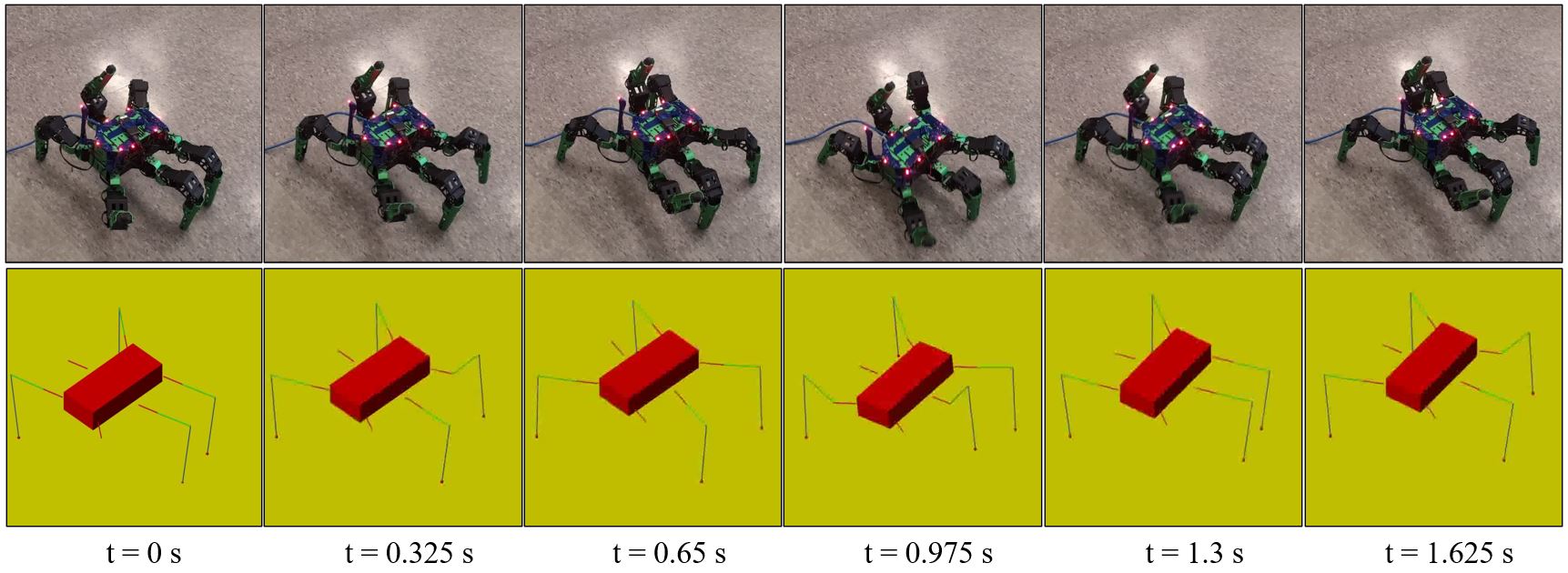}\\
  \caption{Snapshots of 1.625 seconds of model and robot motions motion for the second damage scenario}
  \label{damage2-snap}
\end{figure*}

\begin{figure}[hbt!]
  \centering
  \includegraphics[width=.45\textwidth]{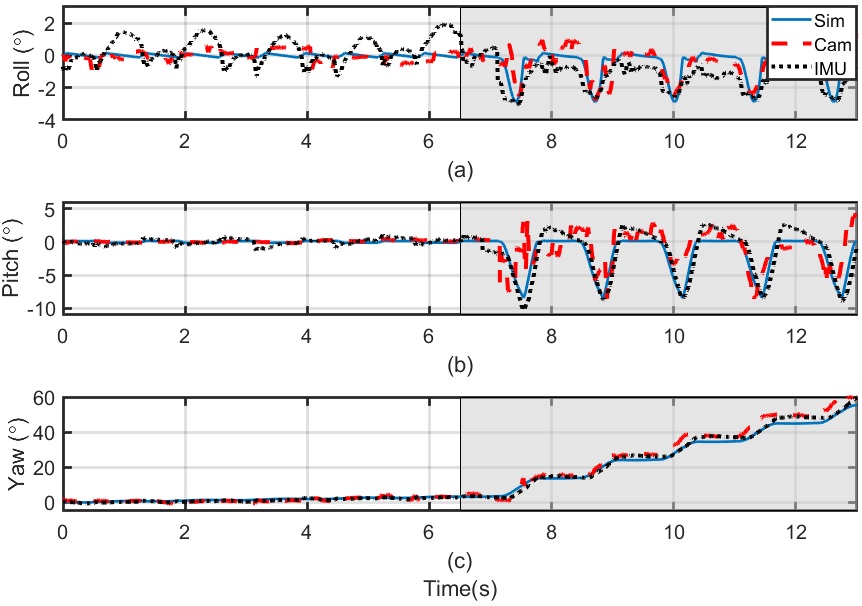}\\
  \caption{Main body orientation during the third damage scenario}
  \label{damage3_o}
\end{figure}
\begin{figure}[hbt!]
  \centering
  \includegraphics[width=.45\textwidth]{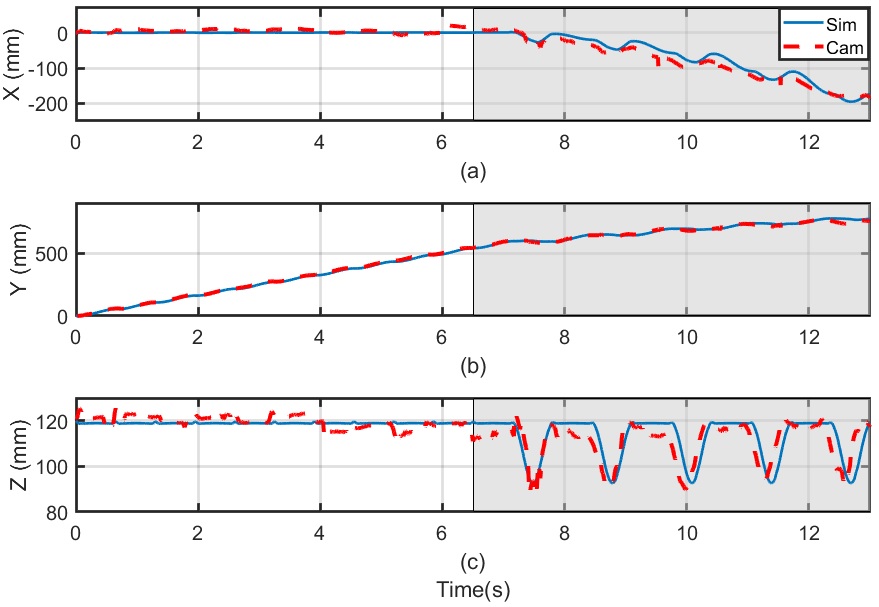}\\
  \caption{Main body COM position during the third damage scenario}
  \label{damage3_p}
\end{figure}

\section{Conclusion}
\label{conclusion}

Data-driven whole-body models, such as those based on neural networks, require retraining after damage, making real-time motion prediction infeasible until new data is incorporated. In addition, traditional analytical models rely on fixed physical parameters, making them rigid and difficult to adapt to unexpected morphological changes. This paper addresses these challenges by introducing an adaptable and modular analytical modeling approach for MLRs that autonomously adapts to damage scenarios without retraining or manual rederivation of equations.

Our contributions can be summarized as follows. First, we presented a singularity-free dynamic model using the Boltzmann-Hamel formulation, capturing all six DoFs of the main body and their influence on leg motions while surpassing traditional recursive models in computational efficiency. Second, we developed a modular version of this model that dynamically adjusts to morphological changes, eliminating the need for equation rederivation when mechanical damage occurs. Third, we introduced a computationally efficient formulation where the full system dynamics are generated by replicating equations derived for known leg morphologies, reducing runtime, complexity, and achieving real-time performance over three times faster than traditional methods. Finally, we validated our approach through extensive experiments on both healthy and damaged MLRs, leveraging both internal and external sensory data.

This work establishes a foundation for adaptive, high-speed modeling of MLRs, facilitating real-time control and damage recovery in unpredictable environments. In future work, we intend to extend our proposed model to include an autonomous self-modeling algorithm for MLRs and apply it for damage identification and recovery.

\bibliographystyle{IEEEtran}
\bibliography{bare_jrnl}

\appendices

\section{Proof of Lemma 3}
\label{app:mass}

\begin{proof}
Given $M_{ij}$ from  \eqref{Mij_new2} and knowing that the only variable part of the equation is in the form of $_i\Add^1_j$, to find $\frac {\partial M }{\partial \theta_{ik} }$, it suffices to find the derivative of these terms with respect to the joint angles. While we know $\frac {\partial \, _i\Add^1_j }{\partial \theta_{ik} } \, =\0_{6 \times 6 }$ for $k \geq j$, we calculate the partial derivatives of these terms for $k < j$,  :
\small
\begin{align} \nonumber
    &\frac {\partial \, _i\Add^1_j (.)}{\partial \theta_{ik} } = \frac {\partial }{\partial \theta_{ik} } \Ad^{-1}_{(e^{\widehat{\xi}_{i1} \theta_{i1}} \dotsb  e^{\widehat{\xi}_{ij} \theta_{ij}}) } (.)  \\ \nonumber
    &=\frac {\partial }{\partial \theta_{ik} } \Ad_{(e^{-\widehat{\xi}_{ij} \theta_{ij}}  \dotsb  e^{-\widehat{\xi}_{i1} \theta_{i1}}) } (.)  \\ \nonumber
    &=\frac {\partial }{\partial \theta_{ik} } \left( e^{-\widehat{\xi}_{ij} \theta_{ij}}  \dotsb  e^{-\widehat{\xi}_{i1} \theta_{i1}}  \, (.)^\wedge e^{\widehat{\xi}_{i1} \theta_{i1}}  \dotsb  e^{\widehat{\xi}_{ij} \theta_{ij}} \right) ^\vee  \\ \nonumber
    &=\frac {\partial }{\partial \theta_{ik} } \left( e^{-\widehat{\xi}_{ij} \theta_{ij}}  \dotsb  ( -\widehat{\xi}_{ik}  e^{-\widehat{\xi}_{ik} \theta_{ik}} )  \dotsb  e^{-\widehat{\xi}_{i1} \theta_{i1}}  (.)^\wedge e^{\widehat{\xi}_{i1} \theta_{i1}} \dotsb  e^{\widehat{\xi}_{ij} \theta_{ij}} \right.\\ \nonumber
    &\left. + e^{-\widehat{\xi}_{ij} \theta_{ij}}  \dotsb  e^{-\widehat{\xi}_{i1} \theta_{i1}} \, (.)^\wedge  e^{\widehat{\xi}_{i1} \theta_{i1}}  \dotsb ( e^{\widehat{\xi}_{ik} \theta_{ik}}  \widehat{\xi}_{ik}) \dotsb e^{\widehat{\xi}_{ij} \theta_{ij}} \right) ^\vee \\ \nonumber
    & =- \, _i\Add^{k-1}_j \left(  \widehat{\xi}_{ik} (\, _i\Add^{1}_k(.))^\wedge - (\, _i\Add^{1}_k(.))^\wedge \widehat{\xi}_{ik} \right)^\vee  \\ 
    &=- \, _i\Add^{k-1}_j [  \widehat{\xi}_{ik},(\, _i\Add^{1}_k(.))]^\wedge = -\,_i\Add^{k-1}_j \ad_{\widehat{\xi}_{ik}} \,_i\Add^{1}_k(.).
\end{align}
\normalsize
Similarly $\frac {\partial \, \xi^{'}_{i\beta} }{\partial \theta_{ik} } =\0_{6 \times 1 } $ for $k \geq \beta-1$; hence, for $k < \beta-1$ we have:
\small
\begin{align} \nonumber
    &\frac {\partial \xi^{'}_{i\beta} }{\partial \theta_{ik} } = \frac {\partial }{\partial \theta_{ik} } \Ad_{(e^{\widehat{\xi}_{i1} \theta_{i1}}  \dotsb  e^{\widehat{\xi}_{i\beta -1} \theta_{i\beta-1}}) } (.)  \\ \nonumber
    &= \frac {\partial }{\partial \theta_{ik} } \left( e^{\widehat{\xi}_{i1} \theta_{i1}} \dotsb e^{\widehat{\xi}_{i\beta-1} \theta_{i\beta-1}}  (.)^\wedge  e^{-\widehat{\xi}_{i\beta-1} \theta_{i\beta-1}}  \dotsb  e^{-\widehat{\xi}_{i1} \theta_{i1}} \right) ^\vee \\ \nonumber
    &= \frac {\partial }{\partial \theta_{ik} } \left( e^{\widehat{\xi}_{i1} \theta_{i1}} \dotsb ( e^{\widehat{\xi}_{ik} \theta_{ik}}  \widehat{\xi}_{ik} )  \dotsb e^{\widehat{\xi}_{i\beta-1} \theta_{i\beta-1}}  (.)^\wedge  e^{-\widehat{\xi}_{i\beta-1} \theta_{i\beta-1}} \right.\\ \nonumber 
    & \left. \dotsb  e^{-\widehat{\xi}_{i1} \theta_{i1}} +  e^{\widehat{\xi}_{i1} \theta_{i1}}  \dotsb  e^{\widehat{\xi}_{i\beta-1} \theta_{i\beta-1}} (.)^\wedge  e^{-\widehat{\xi}_{i\beta-1} \theta_{i\beta-1}} \right.\\ \nonumber 
    & \left.\dotsb  ( -\widehat{\xi}_{ik} e^{-\widehat{\xi}_{ik} \theta_{ik}} ) \dotsb  e^{-\widehat{\xi}_{i1} \theta_{i1}} \right) ^\vee \\ \nonumber 
    &= (\,_i\Add^{1}_k)^{-1} \left(  \widehat{\xi}_{ik} ( (\,_i\Add^{k+1}_{\beta-1})^{-1}(.))^\wedge - ( (\,_i\Add^{k+1}_{\beta-1})^{-1} (.))^\wedge \widehat{\xi}_{ik} \right)^\vee \\  
    &= (\,_i\Add^{1}_k)^{-1} [  \widehat{\xi}_{ik},( (\,_i\Add^{k+1}_{\beta-1})^{-1} (.))]^\wedge (\,_i\Add^{1}_k)^{-1} \ad_{\widehat{\xi}_{ik}} (\,_i\Add^{k+1}_{\beta-1})^{-1}(.).
\end{align}
\normalsize
The stated equations are the straightforward computations from these calculations.
\end{proof}

\label{prooftheorem}

\section{Proof of Proposition 2} \label{app:PRO2}

Firstly, We start with reformulating the first term of \eqref{chap4:C}:
\begin{align} \nonumber  
    & \sum\limits_{i=1}^{N}\sum\limits_{k=1}^{n_i}\frac {\partial M }{\partial \theta_{ik} } \,\dot\theta_{ik}  =  \sum\limits_{i=1}^{N}\sum\limits_{k=1}^{n_i}\frac {\partial (M_b + \sum\limits_{i=1}^{N} \bar\ex_i  M_{leg_i}) }{\partial \theta_{ik} } \,\dot\theta_{ik} = \\ 
    &\sum\limits_{i=1}^{N}\sum\limits_{k=1}^{n_i} \left( \frac {\partial M_b}{\partial \theta_{ik} } + \frac {\partial (\sum\limits_{i=1}^{N} \bar\ex_iM_{leg_i}) }{\partial \theta_{ik} } \right)\,\dot\theta_{ik} 
\end{align}
where $M_b$ is constant, leading to $\frac {\partial M_b }{\partial \theta_{ik} } = 0$, and:
\begin{align} \nonumber  
    &\sum\limits_{i=1}^{N}\sum\limits_{k=1}^{n_i} \frac {\partial ( \bar\ex_iM_{leg_i}) }{\partial \theta_{ik} } \,\dot\theta_{ik} = \sum\limits_{i=1}^{N}\sum\limits_{k=1}^{n_i} \frac {\partial ( \sum\limits_{j=1}^{n_i}\ex_{ij} M_{ij}) }{\partial \theta_{ik} } \,\dot\theta_{ik} \\ & \sum\limits_{i=1}^{N} \sum\limits_{j=1}^{n_i} \left( \sum\limits_{k=1}^{n_i} \ex_{ij} \frac {\partial M_{ij}(\theta_i) }{\partial \theta_{ik} } \right)\,\dot\theta_{ik}, \label{eq:aaa}
\end{align}
where $\frac {\partial M_{ij} }{\partial \theta_{ik} } = 0$ if $k>j$, therefore
\begin{align} \nonumber  
    & \sum\limits_{i=1}^{N}\sum\limits_{k=1}^{n_i}\frac {\partial M }{\partial \theta_{ik} } \,\dot\theta_{ik}  = \sum\limits_{i=1}^{N} \sum\limits_{j=1}^{n_i} \left( \sum\limits_{k=1}^{j} \ex_{ij} \frac {\partial M_{ij}(\theta_i) }{\partial \theta_{ik} } \right)\,\dot\theta_{ik} \,.
\end{align}

Secondly, we need to calculate $\mathcal{P} = Mv$ by implementing $M$ from \eqref{chap4:m_robot_NEW}:

\begin{gather} \nonumber
    \mathcal{P} \!\!=\!\! \begin{bmatrix}
    \mathcal{P}_v\\\mathcal{P}_\omega\\\mathcal{P}_\theta
\end{bmatrix} \!\!=\!\! Mv = \\ \nonumber \small\begin{bmatrix}
            I_b\!\! +\!\! \sum\limits_{i=1}^{N} \bar\ex_{i} M^{bb}_{leg_i}\!\!(\theta_i)  & \bar\ex_{1}\mathcal{M}^{b\theta}_{leg_1}\!\!(\theta_1)  & \cdots & \bar\ex_{N}\mathcal{M}^{b\theta}_{leg_N}\!\!(\theta_N)  \\
            \bar\ex_{1}\mathcal{M}^{\theta b}_{leg_1}\!\!(\theta_1)  &  \bar\ex_{1}\mathcal{M}^{\theta\theta}_{leg_1}\!\!(\theta_1) & \0 & \0 \\
            \vdots & \0 & \ddots & \0 \\
            \bar\ex_{N}\mathcal{M}^{\theta b}_{leg_N}\!\!(\theta_N)  & \0 & \0 & \bar\ex_{N}\mathcal{M}^{\theta\theta}_{leg_N}\!\!(\theta_N)  
        \end{bmatrix} \begin{bmatrix} V^b_{s,b} \\ \dot\theta \end{bmatrix} = \\\nonumber
        \begin{bmatrix}
        I_b V^b_{s,b} + \sum\limits_{i=1}^{N} \bar\ex_{i} M^{bb}_{leg_i} V^b_{s,b} +  \bar\ex_{i}\mathcal{M}^{\theta\theta}_{leg_i} \dot\theta_i ) \\ 
        \bar\ex_{1}\mathcal{M}^{\theta b}_{leg_1} V^b_{s,b} +  \bar\ex_{1}\mathcal{M}^{\theta\theta}_{leg_1} \dot\theta_1  \\ \vdots \\   \bar\ex_{N}\mathcal{M}^{\theta b}_{leg_N} V^b_{s,b} +  \bar\ex_{N}\mathcal{M}^{\theta\theta}_{leg_N} \dot\theta_N  \end{bmatrix} = \\
        \begin{bmatrix}
        I_b V^b_{s,b} + \sum\limits_{i=1}^{N} \bar\ex_i\sum\limits_{j=1}^{n_i} \ex_{ij}( M_{ij}^{bb} V^b_{s,b} +  \ \mathcal{M}_{ij}^{b\theta} \dot\theta_i ) \\ \begin{matrix} \bar\ex_1\sum\limits_{j=1}^{n_1} \ex_{1j} \bigg( \mathcal{M}^{\theta b}_{1j} V^b_{s,b} + \mathcal{M}^{\theta \theta}_{1j} \dot\theta_1 \bigg) \\ \vdots \\ \bar\ex_N\sum\limits_{j=1}^{n_N} \ex_{Nj} \bigg( \mathcal{M}^{b\theta}_{Nj} V^b_{s,b} +  \mathcal{M}^{\theta \theta}_{Nj} \dot\theta_N \bigg) \end{matrix} \end{bmatrix}. \label{eq:P_leg}
\end{gather}
The stated equations are the straightforward computations from these calculations.

\end{document}